\begin{document}
\title{An Efficient Data Analysis Method for Big Data using Multiple-Model Linear Regression\thanks{This work was supported by the National Natural Science Foundation of China under grants 61832003, Shenzhen Science and Technology Program (JCYJ202208181002205012) and Shenzhen Key Laboratory of Intelligent Bioinformatics (ZDSYS20220422103800001).}}
%
%\titlerunning{Abbreviated paper title}
% If the paper title is too long for the running head, you can set
% an abbreviated paper title here
%
\author{Bohan Lyu\inst{1,2}\orcidID{0009-0005-6462-8942} \and
Jianzhong Li\inst{1,2}\orcidID{0000-0002-4119-0571}}
\authorrunning{Bohan Lyu, Jianzhong Li.}
% First names are abbreviated in the running head.
% If there are more than two authors, 'et al.' is used.
%
\begin{comment}
	\institute{Harbin Institute of Technology, Harbin, China
		\email{18B903024@stu.hit.edu.cn}\\
		\and
		Harbin Institute of Technology, Harbin, China \email{lijzh@hit.edu.cn}\\
		\and
		Shenzhen Institute of Advanced Technology, Chinese Academy of Sciences, Shenzhen, China\\}
\end{comment}

\institute{
	\email{18b903024@stu.hit.edu.cn}\\
	\email{lijzh@hit.edu.cn}
	\\
	\inst{1}Harbin Institute of Technology, Harbin, Heilongjiang, China
	\\
	\inst{2}Shenzhen Institute of Advanced Technology, Chinese Academy of Sciences, Shenzhen, China
}

\maketitle              % typeset the header of the contribution
\begin{abstract}
This paper introduces a new data analysis method for big data using a newly defined regression model named multiple model linear regression(MMLR), which separates input datasets into subsets and construct local linear regression models of them. The proposed data analysis method is shown to be more efficient and flexible than other regression based methods. This paper also proposes an approximate algorithm to construct MMLR models based on $(\epsilon,\delta)$-estimator, and gives mathematical proofs of the correctness and efficiency of MMLR algorithm, of which the time complexity is linear with respect to the size of input datasets. This paper also empirically implements the method on both synthetic and real-world datasets, the algorithm shows to have comparable performance to existing regression methods in many cases, while it takes almost the shortest time to provide a high prediction accuracy.

\keywords{Data analysis \and Big data \and Linear regression \and Segmented regression \and Machine learning.}
\end{abstract}
\section{Introduction}
Data analysis plays an important role in various aspects, because it tells the features of data and helps predicting tasks.
Regression with parametric models, especially linear models, is a typical data analysis method.

Let $\bm{DS}=\{(y, x_1 , x_2 , \cdots , x_k)\}$ be a $k+1$ dimensional dataset with $n$ elements, 
where $y$ is called as response variable, 
$x_i$ is called as explanatory variable for reach $1 \leq i \leq k$. 
The task of regression is to determine a function $\hat{y} = f(x_1,x_2,\cdots,x_k)$ using $\bm{DS}$, minimizing $ \mathbb{E}(y - \hat{y})^2$.
As for linear regression, $f(x_1,\cdots,x_k)$ is a linear function of $x_i$s. And there's the assumption that $y = \beta_0 + \beta_1 x_1 + \beta_2 x_2 + \cdots + \beta_k x_k + \varepsilon$ for each $(y, x_1 , x_2 , \cdots , x_k) \in \bm{DS}$,
where $\varepsilon$ is a random noise obeying normal distribution,
and $\beta_i$ ($1 \leq i \leq k$) are constants.

Under such assumptions, linear regression model has statistical advantages and high interpretability.
The numeric value of parameters can show the importance of variables, and the belonging information about the confidence coefficients and intervals make the model more credible in practice\cite{book2021lra}.
Therefore, linear regression is widely used in research areas requiring high interpretability, such as financial prediction, investment forecasting, biological and medical modelling, etc.
Most machine learning and deep learning models might be more precise in predicting tasks, but the black-box feature limits their ranges of application.

However, linear regression still faces challenges in case of big data. Because big data has a feature that different subsets of a dataset fitting highly different regression models, which is described as \emph{diverse predictor-response variable relationships}(DPRVR) in \cite{dong2015pattern}.
An example of real-world data is TBI dataset in \cite{TBI2005}, which is used to predict traumatic brain injury patients' response with sixteen explanatory variables.
The \emph{root mean square error} ($RMSE$) is 10.45 when one linear regression model is used to model the whole TBI.
While TBI is divided into 7 subsets and 7 different linear regression models are used individually, the $RMSE$ is reduced to 3.51.
TBI shows that the DPRVR commonly appears in real-world data and big datasets.
This feature indicates that it is much better to use multiple linear models rather than only one to model a big dataset.

Nevertheless, there is no efficient multi-model based regression algorithms for big datasets till now 
since the time complexities of existing multi-model based regression algorithms are too high to model big datasets. Piecewise Linear Regression, or segmented regression\cite{siahkamari2020piecewise,arumugam2003emprr,wang1996mtree,lokshtanov2021ppoly,bemporad2022piecewise} are the only kinds so far. They divide the input datasets into several connected areas, and then construct local models using data points in each connected area, which is similar to the example in Figure 1 from Appendix A.
These kinds of regression models has high prediction accuracy because it considers DPRVR, and has high interpretability since the local models could have explicit expressions.

But there are still shortcomings of the existing multiple models based regression methods, which are shown as follows.

1. The time complexity of the methods is high. The state-of-art algorithm, PLR, has the time complexity of $O(k^2 n ^5)$ \cite{siahkamari2020piecewise}.

2. The subsets being used to construct multiple regression models must be hyper cubes\cite{diakonikolas2020sr} or generated by partition the given dataset by hyperplanes\cite{lokshtanov2021ppoly}. Thus, the accuracy of the methods is lower when the subsets of a given dataset are not hyper cubes or can not be generated by hyperplanes. 

3. Some methods need apriori knowledge that is difficult to get\cite{arumugam2003emprr}.

To overcome the three disadvantages above, this paper proposes a new multi-model based linear regression method named as MMLR. 

Specifically, MMLR algorithm is outlined in \textbf{Algorithm 1}. Noticing that every $d\in \bm{DS}$ has the form $d=(y,x_1,x_2,\cdots,x_k)$ and the output of the MMLR algorithm is $\bm{M}=\{(f_i,S_i)\}$, where $S_i$ is a subset of $\bm{DS}$ and $f_i$ is a linear regression model fits $S_i$.

\begin{algorithm}
	\label{a_approximation}
	\caption{MMLR}
	\KwIn{ $(k+1)$-dimensional Dataset $\bm{DS}$ with $n$ data points;}
	\KwOut{regression model $\bm{M}=\{(f_i,S_i)\}$}
	
	$i=0$,$\bm{M}=\emptyset$, $\bm{WS}=\bm{DS}$\;
	\While{$\bm{WS} \neq \emptyset$}{
		$i=i+1$ \;
		Select a hypercube $H_i$ with small size in $\bm{WS}$\;
		Construct a linear regression model $f_i$ of $H_i$ \;
		Compute the error bound $eb_i$ using $f_i$ to predict $H_i$ \;
		$S_i = H_i$ \;
		\For{each $(y, x_1, ..., x_k)$ in $\bm{WS}-H_i$}{ 
			\If{$|f_i(x_1, ..., x_k)-y|<eb_i$}{
				$S_i$=$S_i \cup (y, x_1, ..., x_k)$\;}
		}
		$\bm{WS}=\bm{WS}-S_i$, $\bm{M} = \bm{M} \cup \{(f_i,S_i)\}$ \;
	}
	
	\Return $\bm{M}$
\end{algorithm}

This paper has proved that the time for constructing every $f_i$ is $O(k^2/\varepsilon^2)$, where $\varepsilon$ is a user-given upper bound of the max error of all $f_i$'s parameters. 
%The highest time consuming step of MMLR algorithm is to generate $S_i$ using $f_i$, which costs $O(|WS-H_i|)$ time. 
It's been proved that the time cost of MMLR algorithm is $O(m(n + (k / \varepsilon)^2+k^3))$ in Section 4, where $m$ is the number of models. The time complexity of MMLR is much lower than $O(k^2 n^5)$, since $(k / \varepsilon)^2$ is far more less than $n^5$. Therefore, the disadvantage 1 above is overcome.

From steps 4-10 of the MMLR algorithm, every subset $S_i$ can be any shape rather than a hypercube or a subset generated by partitioning $\bm{DS}$ using only hyper-planes. Thus, the disadvantage 2 above is overcome also.

The MMLR algorithm iteratively increase the number of regression models so that it could always find the suitable number of models to optimize the prediction accuracy without knowing the number of models as apriori knowledge. In fact, MMLR only take $\bm{DS}$ as input, which overcomes the disadvantage 3 above.

The major contributions of this paper are as follows.  
\begin{itemize}
	\item The problem of constructing the optimized multiple linear regression models for a given dataset is formally defined and analyzed. 
	\item A heuristic MMLR algorithm is designed for solving the problem above. 
	MMLR algorithm can deal with the DPRVR of big datasets,
	and overcomes the disadvantages of the existing multi-model based linear regression methods.
	\item The time complexity of MMLR algorithm is analyzed, which is lower than the state-of-art algorithm. 
	The accuracy of MMLR algorithm and related mathematical conclusions are proved.
\end{itemize}

The rest of this paper is organized as follows. Section 2 gives the formal definition of the problem. Section 3 proves the necessary mathematical theorems. Section 4 gives the design and analysis of the algorithm. Finally, section 5 concludes the paper.

\section{Preliminaries and Problem Definition}
\subsection{Regression and Linear Regression}

The definition of traditional linear regression problem is given as follows.
\begin{definition}[Linear Regression Problem]
	$ $
	
	\textbf{Input:} A numerical dataset $\bm{DS}=\{(\bm{x}_i, $ $y_i)\ | 1 \leq i \leq n \}$, where $\bm{x}_i \in \mathbb{R}^k, y_i \in \mathbb{R}$, $y_i = f(\bm{x}_i) + \varepsilon_i$ for some function $f$, $\varepsilon_i \sim N(0,\sigma^2)$, and all $\varepsilon_i$s are independent.
	
	\textbf{Output:} A function $\hat{f}(\bm{x})=\bm{\beta} \cdot (1,\bm{x})=\beta_0 + \beta_1 x_1 + \cdots + \beta_k x_k$ such that $\mathbb{E}[(y-\hat{f}(\bm{x}))^2]$ is minimized for $\forall (\bm{x},y) \in \bm{DS}$. 
\end{definition}

Douglas C. Montgomery, Elizabeth A. Peck and G. Geoffrey Vining have proved that minimize the $\mathbb{E}[(y-\hat{f}(\bm{x}))^2]$ is equivalent to minimizing $\sum(y_i - \hat{f}(\bm{x}_i))^2$ on $\bm{DS}$ in the case of linear regression \cite{book2021lra}. Besides, it's trivial that a $k$-dimensional linear function can perfectly fit any $n$ data points when $n \leq k+1$. It is said $\bm{DS}$ to be centralized, if using $x_{ij} - \overline{x}_j$ to substitute $x_{ij}$, using $y_i - \overline{y}$ to substitute $y_i$, for $i=1,2,\cdots,n,j=1,2,\cdots,k$, where $\overline{x}_j = (x_{1j} + \cdots + x_{nj})/n,\overline{y}=(y_1 + \cdots + y_n)/n$. Obviously, when $\bm{DS}$ is centralized, there's always $\beta_0=0$ so that $\bm{\beta}=(\beta_1,\cdots,\beta_k)$. Therefore, this paper always assumes that $n > k+1$, $\bm{DS}$ is centralized and all $\bm{x}_i$ are i.i.d. and uniformly drawn from the value domain of $\bm{x}$ for convenience of analysis.

The simplest method to construct linear regression model is \emph{pseudo-inverse matrix method}.
It transforms the given $\bm{DS}$ into a vector $\bm{y} = (y_1, y_2,\cdots,y_n)$ and an $n \times (k+1)$ matrix $\bm{X}$:$$\bm{X}=\begin{pmatrix}
	1 & x_{11} & x_{12} & \cdots & x_{1k} \\
	1 & x_{21} & x_{22} & \cdots & x_{2k} \\
	\vdots & \vdots & \vdots & \vdots \\
	1 & x_{n1} & x_{n2} & \cdots & x_{nk} \\
\end{pmatrix} .$$ 
The $x_{ij}$ in $\bm{X}$ equals to the value of the $i$-th data point's $j$-th dimension in $\bm{DS}$. It is called the \emph{data matrix} of $\bm{DS}$. Then, using the formula $\hat{\bm{\beta}}=(\bm{X'}\bm{X})^{-1}\bm{X'}\bm{y}$, the linear regression model $\hat{f}$ could be constructed.
%This process of constructing linear regression model $\bm{\hat{f}}$ is denoted as \emph{modelling} of $\bm{DS}$, or simply \emph{modelling}.
The time complexity of pseudo-inverse matrix method is $O(k^2 n + k^3)$ \cite{book2021lra}. When $k$ is big enough, gradient methods is more efficient than pseudo-inverse matrix method. Generally, every method's complexity has the bound $O(k^2 n + k^3)$, so this paper use it as the time complexity of linear regression in common.

To judge the goodness of a linear regression model $\hat{f}$, the p-value of F-test is a convincing criterion, which denotes as $p_F(\hat{f})$ in this paper. Generally, linear regression model $\hat{f}\cong f$ when $p_F(f)<0.05$, and $p_F(f)$ can be calculated in $O(n)$ time\cite{book2021lra}.

Besides, this paper uses the $(\epsilon , \delta)-estimator$ to analyze the performance of linear regression models constructing by subsets of $\bm{DS}$. The formal definition of it is as follows.

\begin{definition}[$(\epsilon , \delta)-estimator$]
	
	$\hat{I}$ is an $(\epsilon , \delta)-estimator$ of $I$ if $ \mathrm{Pr}\{ |\hat{I}-I|\geq \epsilon \} \leq \delta$ for any $\epsilon \geq 0$ and $0 \leq \delta \leq 1$, where $\mathrm{Pr}(X)$ is the probability of random event $X$.
	
\end{definition}

Intuitively, $(\epsilon , \delta)-estimator$ $\hat{I}$ of $I$ has high possibility to be very close to $I$'s real value. By controlling some parameters for the calculation of $\hat{I}$, one can get an $I$'s arbitrarily precise estimator.

\subsection{Multiple-model Linear Regression}
In rest of the paper, the multiple-model linear regression is denoted as MMLR.
The MMLR problem is defined as follows.

\begin{definition}[Optimal MMLR problem]
	$ $
	
	\textbf{Input:}A numerical dataset $\bm{DS}=\{(\bm{x}_i, $ $y_i)\ | 1 \leq i \leq n \}$, where $\bm{x}_i \in \mathbb{R}^k$, $y_i \in \mathbb{R}$, $y_i = f(\bm{x}_i) + \varepsilon_i$ for a function $f$, $\varepsilon_i \sim N(0, \sigma^2)$ and all $\varepsilon_i$s are independent, maximal model number $M_0$, smallest volume $n_0$.
	%, and a positive integer $M$ restricting the biggest number of subsets.
	
	\textbf{Output:}$\bm{M}_{opt} = \{(\hat{g}_i,S_i) | 1 \leq i \leq m\}$ such that $MSE(\bm{M})=\sum\limits_{1 \leq i \leq m} \sum\limits_{(\bm{x}_{j},y_j) \in S_i}$ $(y_{j} - \hat{g}_i(\bm{x}_{j}))^2$ is minimized, where $\hat{g}_i$ is a linear regressoin models of $S_i \subseteq \bm{DS}$, $S_1 \cup S_2 \cup \cdots \cup S_m = \bm{DS}$, $S_i \cap S_j = \emptyset$ for $i \neq k$, and $|S_i| \geq n_0, m \leq M_0$.
	
\end{definition}

The optimal MMLR problem is expensive to solve since it has a similar to Piecewise Linear Regression problem \cite{siahkamari2020piecewise}. By far, the best algorithm to solve piecewise linear regression problem without giving the number of pieces beforehand is $O(k^2 n^5)$ \cite{siahkamari2020piecewise}. 
Therefore, this paper focuses on the approximate optimal solution of MMLR problem. To bound the error of a linear function, this paper set $|f_1-f_2|=\Vert \bm{\beta}_1 - \bm{\beta}_2 \Vert_\infty=\max\limits_{1\leq i \leq k} |b_{1i}-b_{2i}|$, where $f_j(\bm{x}) = \bm{\beta}_j \bm{x} = b_{j1}x_1 + \cdots b_{jk}x_k,j=1,2$. The definition of Approximately Optimal MMLR problem is as follows.

\begin{definition}[Approximately Optimal MMLR problem]
	$ $
	
	\textbf{Input:}A numerical dataset $\bm{DS}=\{(\bm{x}_i, $ $y_i)\ | 1 \leq i \leq n \}$, where $\bm{x}_i \in \mathbb{R}^k$, $y_i \in \mathbb{R}$, $y_i = f(\bm{x}_i) + \varepsilon_i$ for a function $f$, $\varepsilon_i \sim N(0, \sigma^2)$ and all $\varepsilon_i$s are independent, $\epsilon > 0, 0 \leq \delta \leq 1$, maximal model number $M_0$, smallest volume $n_0$.
	%, and a positive integer $M$ restricting the biggest number of subsets.
	
	\textbf{Output:}$\bm{M} = \{(\hat{f}_i,S_i) | 1 \leq i \leq m\}$ such that $ \mathrm{Pr} \{ \max\limits_i |\hat{f}_i - \hat{g}_i| \geq \epsilon \} \leq \delta$, $\hat{f}_i$ is a linear regression models of $S_i \subseteq \bm{DS}$, $S_1 \cup S_2 \cup \cdots \cup S_m = \bm{DS}$ and $S_i \cap S_j = \emptyset$ for $i \neq k$, and $|S_i| \geq n_0, m \leq M_0$, $\bm{M}_{opt} = \{(\hat{g}_i,S_i) | 1 \leq i \leq m\}$.
	
\end{definition}

In the end of this section, necessary denotations are given as follows.
Intuitively, $\bm{DS}$ is the input dataset and $|\bm{DS}|=n$, $(\bm{x}_i,y_i) \in \bm{DS}$ denotes the $i$-th data points, $i= 1,2,\cdots,n$, $\bm{x}_i=(x_{i1},x_{i2},\cdots,$ $x_{ik}) \in \mathbb{R}^k, y_i \in \mathbb{R}$. The $j$-th linear function is denoted as $f_j(\bm{x}) = \beta_{j0} + \beta_{j1} x_1 + \cdots + \beta_{jk} x_k$, and $\bm{\beta}_j = (\beta_{j0},\beta_{j1},\cdots,\beta_{jk})$ is the coefficient vector. $\bm{X}_{n\times (k+1)}$ is the data matrix of $\bm{DS}$, of which $\bm{X}_{ij} = x_{ij}$. Finally, every estimator of a value, vector or function $I$ is $\hat{I}$.

\section{Mathematical Foundations}
This section gives proofs of the necessary mathematical theorems used for the MMLR algorithm. We first discuss the settings and an existed mathematical result used in this section.

Suppose that $\bm{D}$ is a centralized dataset of size $n$, it's reasonable to set $y_i = f(\bm{x}_i) = \bm{\beta}\bm{x}= \beta_1 x_{i1} + \cdots + \beta_k x_{ik} + \varepsilon_i$ for each $(x_{i1}, x_{i2}, \cdots , x_{ik}$ $,y_i) \in \bm{D}$, $\varepsilon_i \sim N(0,\sigma^2)$, from the discussion in section 2.1. Besides, $\hat{f}(\bm{x})=\hat{\bm{\beta}}\bm{x}$ is the linear regression model of $\bm{D}$ constructed by \emph{least square criterion}. Thus there's the result in the following \emph{Lemma 1}.

\begin{lemma}\label{lemma1}
	
	Let $\bm{X}_{n\times k}$ be the data matrix of $\bm{D}$,$n \geq k+2$,
	and $\bm{\hat{\beta}}$ be the least square estimator of $\bm{\beta}$, then $\bm{\hat{\beta}}$ is unbiased, and
	the covariance matrix of $\hat{\bm{\beta}}$ is: $$\mathrm{Var}(\hat{\bm{\beta}})= \sigma^2 (\bm{X}'\bm{X})^{-1}.$$
	
\end{lemma}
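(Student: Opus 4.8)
The plan is to reduce the statement to the standard matrix-form analysis of the least squares estimator already set up in Section 2.1. First I would write the model for $\bm{D}$ in matrix form as $\bm{y}=\bm{X}\bm{\beta}+\bm{\varepsilon}$, where $\bm{\varepsilon}=(\varepsilon_1,\dots,\varepsilon_n)'$ satisfies $\mathbb{E}[\bm{\varepsilon}]=\bm{0}$ and, since the $\varepsilon_i$ are independent with common variance $\sigma^2$, $\mathrm{Var}(\bm{\varepsilon})=\sigma^2\bm{I}_n$. The least square criterion minimizes $\Vert \bm{y}-\bm{X}\bm{\beta}\Vert^2$, whose normal equations give $\hat{\bm{\beta}}=(\bm{X}'\bm{X})^{-1}\bm{X}'\bm{y}$, exactly the pseudo-inverse formula recalled earlier.

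Before anything else I would justify that $(\bm{X}'\bm{X})^{-1}$ exists, i.e.\ that $\bm{X}$ has full column rank $k$. This is the only genuinely non-algebraic point, and I expect it to be the main obstacle, in the sense that it is where the hypotheses are actually consumed: full rank requires $n\geq k$, which the assumption $n\geq k+2$ guarantees, while the standing assumption that the $\bm{x}_i$ are drawn i.i.d.\ uniformly from the value domain puts the rows in general position so that $\bm{X}'\bm{X}$ is almost surely nonsingular. Everything after this is linear algebra with fixed $\bm{X}$.

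For unbiasedness I would substitute $\bm{y}=\bm{X}\bm{\beta}+\bm{\varepsilon}$ into the formula to obtain $\hat{\bm{\beta}}=\bm{\beta}+(\bm{X}'\bm{X})^{-1}\bm{X}'\bm{\varepsilon}$, and then take expectations conditional on the design; since $\mathbb{E}[\bm{\varepsilon}]=\bm{0}$, the second term vanishes and $\mathbb{E}[\hat{\bm{\beta}}]=\bm{\beta}$.

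For the covariance I would write $\hat{\bm{\beta}}-\bm{\beta}=\bm{A}\bm{\varepsilon}$ with $\bm{A}=(\bm{X}'\bm{X})^{-1}\bm{X}'$ and apply the transformation rule $\mathrm{Var}(\bm{A}\bm{\varepsilon})=\bm{A}\,\mathrm{Var}(\bm{\varepsilon})\,\bm{A}'=\sigma^2\bm{A}\bm{A}'$. The computation $\bm{A}\bm{A}'=(\bm{X}'\bm{X})^{-1}\bm{X}'\bm{X}(\bm{X}'\bm{X})^{-1}=(\bm{X}'\bm{X})^{-1}$ then finishes the argument, using that $(\bm{X}'\bm{X})^{-1}$ is symmetric so $\bm{A}'=\bm{X}(\bm{X}'\bm{X})^{-1}$. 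This yields $\mathrm{Var}(\hat{\bm{\beta}})=\sigma^2(\bm{X}'\bm{X})^{-1}$, as claimed.
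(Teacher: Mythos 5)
Your proof is correct and is the canonical derivation: the paper itself offers no proof of this lemma, presenting it as ``an existed mathematical result'' imported from the cited linear-regression textbook, and your argument (substitute $\bm{y}=\bm{X}\bm{\beta}+\bm{\varepsilon}$ into $\hat{\bm{\beta}}=(\bm{X}'\bm{X})^{-1}\bm{X}'\bm{y}$, take expectations, then apply $\mathrm{Var}(\bm{A}\bm{\varepsilon})=\sigma^2\bm{A}\bm{A}'$) is exactly the standard one underlying that reference. Your extra care about why $\bm{X}'\bm{X}$ is invertible --- almost surely full column rank under the standing i.i.d.\ sampling assumption, with $n\geq k+2$ supplying $n\geq k$ --- addresses a point the paper leaves implicit, and is the only place where the hypotheses are genuinely used.
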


\subsection{Theorems related to Sampling}

MMLR separates an input dataset into several disjoint subsets and construct local models for them. This subsection discusses how to construct local linear regression model on one subset efficiently.

%By Algorithm 1, the most important part is to construct a $\hat{f}_i$ using a subset $H_i \subseteq \bm{DS}$ with small size. However, suppose $f$ is the linear model of a dataset $\bm{D}$, then the size of subset using to construct a linear regression model $\hat{f}$ is a key factor influencing the accuracy of $\hat{f}$. 

The main process of constructing $\hat{f}$ using subset $PS \subseteq \bm{D}$, \emph{Cons}-$\hat{f}$ for short, is given as follows.

Step 1: Independently sample $PS_1, PS_2, \cdots, PS_t$ without replacement from $\bm{D}$, where $|PS_i| = p$ for $1 \leq i \leq t$.

Step 2: Use least square method to construct linear regression model $\hat{f}^{(i)} = \sum_{j=1}^k \hat{\bm{\beta}}^{(i)} \bm{x}$ for each $PS_i$.

Step 3: Let $\hat{\bm{\beta}}$ be the average of all $\hat{\bm{\beta}}^{(i)}$, where $$\hat{\beta}_j = \frac{1}{t} \sum_{i=1}^t \hat{\beta}_j^{(i)}, j=1,\cdots,k.$$

Let $PS = PS_1 \cup PS_2 \cup \cdots \cup PS_t$ and $|PS| = n_s$, then $\hat{f}$ is a linear regression model constructed from $PS$. 
%be a subset uniformly randomly sampled from $\bm{D}$. This subsection assumes that $PS \subseteq \bm{D}$ is used to construct linear regression model of $\bm{D}$.
\emph{Theorem 1} given in this section shows that $\hat{f}$ satisfies
\begin{equation}
	\mathrm{Pr}\Big\{\max \limits_j  \lvert \hat{\beta}_j -\beta_j \rvert \ge \epsilon \Big\} \le \delta
\end{equation}
for given $\epsilon \geq 0$ and $0 \leq \delta \leq 1$ if $|PS|$ is big enough.

By the steps of \emph{Cons}-$\hat{f}$, all $\hat{\bm{\beta}}^{(i)}$ obey the same distribution since every $PS_i$ is sampled from $\bm{D}$ by the same way independently.
Noticing that $\hat{\bm{\beta}}_j^{(i)}$ is a least-square estimator of $\bm{\beta}$ constructed using $PS_i$.
\emph{Lemma 1} shows that $\hat{\bm{\beta}}$ satisfies $\mathbb{E}(\hat{\bm{\beta}}) = \bm{\beta}$ and $\mathrm{Var}(\hat{\bm{\beta}}) = \sigma^2(\bm{X}'\bm{X})^{-1}$ when $n \geq k+2$.
So let $p = k+2$ and $n_s = pt = (k+2)t$, the minimum $n_s$ such that $\mathrm{Pr}\{ \max |\hat\beta_j - \beta_j| \geq \epsilon \} \leq \delta$ only depends on $t$.
Furthermore, we can give the following \emph{Lemma 2}, and then prove \emph{Theorem 1}.

\begin{lemma}\label{lemma1}
	
	Letting $\hat{\bm{\beta}}=(\hat{\beta}_1,\cdots,\hat{\beta}_j)$ be constructed by the procedure of Cons-$\hat{f}$, and $\bm{X}_i$ be the data matrix of the $PS_i$, for $1 \leq i \leq t$,
	then $\hat{\bm{\beta}}$ is an unbiased estimator of $\bm{\beta}$ and the covariance matrix of $\hat{\bm{\beta}}$ is:$$\mathrm{Var}(\hat{\bm{\beta}})= \frac{\sigma^2}{t^2} \sum_{i=1}^t (\bm{X}_i' \bm{X}_i)^{-1}.$$
	
\end{lemma}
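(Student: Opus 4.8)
The plan is to build \emph{Lemma 2} directly on top of \emph{Lemma 1} together with the independence structure imposed by the sampling procedure \emph{Cons}-$\hat{f}$. First I would fix the sampled subsets, i.e. condition on the data matrices $\bm{X}_1,\ldots,\bm{X}_t$, so that the only remaining randomness comes from the noise terms $\varepsilon_j$ attached to the individual data points. With the subsets fixed, each block estimator can be written explicitly as the least-square solution $\hat{\bm{\beta}}^{(i)} = (\bm{X}_i'\bm{X}_i)^{-1}\bm{X}_i'\bm{y}_i$, where $\bm{y}_i$ is the response vector of $PS_i$. Since each $PS_i$ contains $p=k+2$ points, it meets the requirement $n\geq k+2$ of \emph{Lemma 1}, which applied to each $PS_i$ gives $\mathbb{E}(\hat{\bm{\beta}}^{(i)}) = \bm{\beta}$ and $\mathrm{Var}(\hat{\bm{\beta}}^{(i)}) = \sigma^2(\bm{X}_i'\bm{X}_i)^{-1}$ for every $i$.

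Unbiasedness of $\hat{\bm{\beta}}$ is then immediate from the definition $\hat{\bm{\beta}}=\frac{1}{t}\sum_i \hat{\bm{\beta}}^{(i)}$ and linearity of expectation,
$$\mathbb{E}(\hat{\bm{\beta}}) = \frac{1}{t}\sum_{i=1}^t \mathbb{E}(\hat{\bm{\beta}}^{(i)}) = \frac{1}{t}\cdot t\,\bm{\beta} = \bm{\beta}.$$
For the covariance I would use the standard expansion of the variance of a linear combination of random vectors,
$$\mathrm{Var}(\hat{\bm{\beta}}) = \frac{1}{t^2}\sum_{i=1}^t \mathrm{Var}(\hat{\bm{\beta}}^{(i)}) + \frac{1}{t^2}\sum_{i\neq i'}\mathrm{Cov}\bigl(\hat{\bm{\beta}}^{(i)},\hat{\bm{\beta}}^{(i')}\bigr),$$
and then argue that all the cross terms vanish. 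Substituting $\mathrm{Var}(\hat{\bm{\beta}}^{(i)}) = \sigma^2(\bm{X}_i'\bm{X}_i)^{-1}$ into the first sum reproduces exactly the claimed expression $\frac{\sigma^2}{t^2}\sum_i (\bm{X}_i'\bm{X}_i)^{-1}$.

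The crux, and the step I expect to be the main obstacle, is justifying that $\mathrm{Cov}(\hat{\bm{\beta}}^{(i)},\hat{\bm{\beta}}^{(i')}) = \bm{0}$ for $i\neq i'$. The key observation is that because $|PS| = n_s = pt = \sum_i |PS_i|$, the subsets $PS_1,\ldots,PS_t$ must be pairwise disjoint, so the response vectors $\bm{y}_i$ and $\bm{y}_{i'}$ are built from disjoint collections of the independent noise variables $\varepsilon_j$. Writing $\hat{\bm{\beta}}^{(i)} = \bm{\beta} + (\bm{X}_i'\bm{X}_i)^{-1}\bm{X}_i'\bm{\varepsilon}_i$, each estimator is (conditionally on the fixed $\bm{X}_i$) a deterministic linear image of the noise block $\bm{\varepsilon}_i$ alone; since $\bm{\varepsilon}_i$ and $\bm{\varepsilon}_{i'}$ are functions of disjoint, hence independent, sets of noise terms, the two estimators are independent and their cross-covariance is zero. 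I would make this precise by computing $\mathrm{Cov}(\bm{X}_i'\bm{\varepsilon}_i,\bm{X}_{i'}'\bm{\varepsilon}_{i'}) = \bm{X}_i'\,\mathbb{E}(\bm{\varepsilon}_i\bm{\varepsilon}_{i'}')\,\bm{X}_{i'}$ and noting that $\mathbb{E}(\bm{\varepsilon}_i\bm{\varepsilon}_{i'}') = \bm{0}$ by independence of the noise across disjoint index sets. The one point needing care is the conditioning: since the subsets are themselves random, the stated formula is the covariance \emph{given} the realized sample $\bm{X}_1,\ldots,\bm{X}_t$, which is precisely the sense in which it is used later, so I would phrase the conclusion as conditional on the sampled subsets rather than as an unconditional average over all possible samplings.
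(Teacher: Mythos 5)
Your proposal is correct and follows essentially the same route as the paper: the paper's proof is a one-line appeal to linearity of expectation and the variance of a sum of mutually independent estimators, which is exactly the argument you spell out in detail (including the vanishing cross-covariances from the disjoint, independently sampled subsets and the conditioning on the realized data matrices). Your version simply makes explicit the steps the paper declares ``obvious.''
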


\begin{proof}
	By the linearity of mathematical expectation and sum of mutually independent variables' variance, the conclusion of this lemma is obvious.
\qed\end{proof}

\begin{theorem}
	Let $\hat{\bm{\beta}}=(\hat{\beta}_0,\hat{\beta}_1,\cdots,\hat{\beta}_j)$ be constructed by the procedure of Cons-$\hat{f}$. If $\Phi(\frac{\epsilon \sqrt{t}}{\nu}) \geq \frac{2-\delta}{2}$, then $$\mathrm{Pr}\Big\{\max \limits_j  \lvert \hat{\beta}_j -\beta_j \rvert \ge \epsilon \Big\} \le \delta ,$$for any $\epsilon >0$ and $0 \leq \delta \leq 1$, where $\Phi(x)$ is the distribution function of standard normal distribution, and $\nu$ is the biggest standard deviation of all $\hat{\beta}_j$s.
\end{theorem}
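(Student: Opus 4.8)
The plan is to bound the probability that the maximum deviation $\max_j |\hat{\beta}_j - \beta_j|$ exceeds $\epsilon$ by reducing it to the tail behavior of each individual coordinate $\hat{\beta}_j$. First I would invoke \emph{Lemma 2} to establish that each $\hat{\beta}_j$ is an unbiased estimator of $\beta_j$, so $\mathbb{E}[\hat{\beta}_j] = \beta_j$ and the deviation $\hat{\beta}_j - \beta_j$ has mean zero with standard deviation bounded above by $\nu$ (the largest standard deviation among the coordinates). The crucial observation is that $\hat{\beta}_j = \frac{1}{t}\sum_{i=1}^t \hat{\beta}_j^{(i)}$ is an average of $t$ i.i.d. least-square estimators, each of which is (approximately) normally distributed since the underlying noise $\varepsilon_i \sim N(0,\sigma^2)$ and least-square estimators are linear in the response. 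Hence $\hat{\beta}_j - \beta_j$ is itself normally distributed with mean zero and variance at most $\nu^2$, which is where the standard normal distribution function $\Phi$ enters.

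\textbf{The per-coordinate tail bound.}

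For a single coordinate, I would write the deviation in standardized form. Since $\hat{\beta}_j - \beta_j \sim N(0, \sigma_j^2)$ with $\sigma_j \le \nu$, we have
\begin{equation}
	\mathrm{Pr}\{|\hat{\beta}_j - \beta_j| \ge \epsilon\} = 2\Big(1 - \Phi\big(\tfrac{\epsilon}{\sigma_j}\big)\Big) \le 2\Big(1 - \Phi\big(\tfrac{\epsilon}{\nu}\big)\Big),
\end{equation}
using that $\Phi$ is increasing and $\sigma_j \le \nu$. The factor of $\sqrt{t}$ in the theorem statement comes from the variance reduction of averaging: by \emph{Lemma 2} the variance of $\hat{\beta}_j$ scales as $1/t$, so the effective standard deviation appearing in the standardization is $\nu/\sqrt{t}$ rather than $\nu$. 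Accounting for this, the single-coordinate bound becomes $2(1 - \Phi(\epsilon\sqrt{t}/\nu))$, which ties directly to the hypothesis $\Phi(\epsilon\sqrt{t}/\nu) \ge \frac{2-\delta}{2}$, i.e. $2(1 - \Phi(\epsilon\sqrt{t}/\nu)) \le \delta$.

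\textbf{Combining coordinates and the main obstacle.}

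The final step is to relate the maximum over all coordinates back to the single-coordinate bound. The naive route is a union bound, $\mathrm{Pr}\{\max_j |\hat{\beta}_j - \beta_j| \ge \epsilon\} \le \sum_j \mathrm{Pr}\{|\hat{\beta}_j - \beta_j| \ge \epsilon\}$, but this introduces an extra factor of $k$ that does \emph{not} appear in the stated bound. The hypothesis $\Phi(\epsilon\sqrt{t}/\nu) \ge \frac{2-\delta}{2}$ yields exactly $\delta$ with no $k$ multiplier, so I suspect the intended argument treats the event $\{\max_j |\hat{\beta}_j - \beta_j| \ge \epsilon\}$ as dominated by the behavior of the single worst coordinate — the one attaining standard deviation $\nu$ — rather than summing over all of them. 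This is the main obstacle and the most delicate point: justifying why the maximum can be controlled by a single coordinate's tail without a union-bound penalty. One way to close this gap rigorously would be to argue that the coordinate with maximal standard deviation $\nu$ dominates the others, so that $\mathrm{Pr}\{\max_j|\hat\beta_j-\beta_j|\ge\epsilon\}$ is essentially governed by that single $N(0,\nu^2/t)$ tail; I would flag this step as the one requiring the most care, since a fully honest treatment may need either an independence-plus-bounding argument or an explicit acknowledgment that the $k$ factor is being absorbed or neglected.
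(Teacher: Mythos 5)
Your proposal follows essentially the same route as the paper's proof: invoke \emph{Lemma 2} for unbiasedness and the $1/t$ variance scaling, use the (approximate) normality of $\hat{\beta}_j - \beta_j$ via the central limit theorem to get the per-coordinate tail $\mathrm{Pr}\{|\hat{\beta}_j - \beta_j| \ge \epsilon\} = 2 - 2\Phi(\epsilon\sqrt{t}/\nu_j)$, and then use monotonicity of $\Phi$ together with $\nu = \max_j \nu_j$ to bound every coordinate's tail by $2 - 2\Phi(\epsilon\sqrt{t}/\nu) \le \delta$. Up to that point your argument and the paper's are the same (the paper uses the Lindeberg--Levy CLT explicitly, writing the standardized sum $\zeta = \frac{1}{\nu_j\sqrt{t}}\sum_{i=1}^t(\hat{\beta}_j^{(i)} - \beta_j)$, where your write-up appeals to exact normality of linear combinations of Gaussian noise --- a cosmetic difference).

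The ``main obstacle'' you flag --- that passing from per-coordinate bounds to the maximum should cost a union-bound factor of $k$ that is absent from the stated conclusion --- is a genuine issue, and you should know that the paper does \emph{not} resolve it. Its proof ends with ``all $\beta_j$s satisfy $\mathrm{Pr}\{|\hat{\beta}_j - \beta_j| \ge \epsilon\} \le \delta$, therefore $\mathrm{Pr}\{\max_j |\hat{\beta}_j - \beta_j| \ge \epsilon\} \le \delta$,'' which silently treats $\mathrm{Pr}\bigl(\bigcup_j A_j\bigr)$ as if it were $\max_j \mathrm{Pr}(A_j)$. There is no independence or domination argument offered, and none is obviously available since the coordinates of $\hat{\bm{\beta}}$ are correlated through $(\bm{X}'\bm{X})^{-1}$. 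A correct statement would either replace $\delta$ by $(k+1)\delta$ in the conclusion or strengthen the hypothesis to $\Phi(\epsilon\sqrt{t}/\nu) \ge 1 - \frac{\delta}{2(k+1)}$. So your honesty about this step is warranted: your proof is as complete as the paper's, and the gap you identify is the paper's gap as well.
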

\begin{proof}
	The proof of \emph{Theorem 1} is shown in Appendix B.
\qed\end{proof}

\emph{Theorem 1} can be used to decide the needed $n_s$ to satisfy inequality (1) by the following steps.

1. Check the table of normal distribution function for $\Phi(\frac{\epsilon \sqrt{t}}{\nu}) \geq \frac{2- \delta}{2}$, and $\nu = \max \nu_j = \max \frac{\sigma e_{jj}}{t} = \frac{\sigma \max e_{jj}}{t}$, get $$t > (\frac{1}{\epsilon}\Phi^{-1}(\frac{2-\delta}{2})\sigma \max \sqrt{e_{jj}})^{2/3} ;$$

2. Let $t > (\frac{1}{\epsilon}\Phi^{-1}(\frac{2-\delta}{2})\sigma)^{2/3}$ since $e_{jj}$ is always far more less than 1;

3. Let $n_s = p (\frac{1}{\epsilon}\Phi^{-1}(\frac{2-\delta}{2})\sigma)^{2/3}$.

Besides, it doesn't necessary to carry out the three steps of \emph{cons}-$\hat{f}$ in practice. In fact, we can directly construct $\hat{f}$ on the sample $PS$ by using least square method to satisfy inequation (1).
%rather than iteratively construct $\hat{f}_i$ for $t$ times first and then construct $\hat{f}$ from $\hat{f}_i$s. 
The following \emph{Theorem 2} shows the correctness.

\begin{comment}
	\begin{lemma}[Gauss-Markov]
		Suppose that the least square esitmator of $\bm{\beta}$ is $\tilde{\bm{\beta}} = (\bm{X'}\bm{X})^{-1}\bm{X'}\bm{y}$, then $\tilde{\bm{\beta}}$ has the least variance above all unbiased estimators of $\bm{\beta}$.
	\end{lemma}
\end{comment}

\begin{theorem}
	Suppose $PS,t,n_s,\hat{\bm{\beta}} = (\hat{\beta}_1, \cdots, \hat{\beta}_k)$ are defined as in previous part, $\tilde{\bm{\beta}} = (\tilde{\beta}_1, \cdots, \tilde{\beta}_k)$ is the least square estimator of $PS$'s $\bm{\beta}$, then if $\hat{\bm{\beta}}$ satisfies inequality \emph{(1)}, $\tilde{\bm{\beta}}$ also satisfies inequality \emph{(1)}.
\end{theorem}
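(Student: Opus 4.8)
The plan is to relate $\tilde{\bm{\beta}}$ and $\hat{\bm{\beta}}$ through their covariance matrices and then carry the probabilistic guarantee of \emph{Theorem 1} over to $\tilde{\bm{\beta}}$ by monotonicity of the normal tail. First I would record a stacking identity: since $PS_1,\dots,PS_t$ are disjoint with union $PS$, the data matrix $\bm{X}_{PS}$ of $PS$ is the vertical concatenation of $\bm{X}_1,\dots,\bm{X}_t$, so $\bm{X}_{PS}'\bm{X}_{PS}=\sum_{i=1}^t \bm{X}_i'\bm{X}_i$. Applying \emph{Lemma 1} to $PS$ then gives $\mathbb{E}(\tilde{\bm{\beta}})=\bm{\beta}$ and $\mathrm{Var}(\tilde{\bm{\beta}})=\sigma^2(\sum_i \bm{X}_i'\bm{X}_i)^{-1}$, to be compared with $\mathrm{Var}(\hat{\bm{\beta}})=\frac{\sigma^2}{t^2}\sum_i(\bm{X}_i'\bm{X}_i)^{-1}$ from \emph{Lemma 2}.

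The core step is to show the Loewner domination $\mathrm{Var}(\tilde{\bm{\beta}})\preceq\mathrm{Var}(\hat{\bm{\beta}})$, i.e. $(\sum_i \bm{X}_i'\bm{X}_i)^{-1}\preceq \frac{1}{t^2}\sum_i(\bm{X}_i'\bm{X}_i)^{-1}$. I would obtain this in either of two equivalent ways. The conceptual route is \emph{Gauss--Markov}: writing $\hat{\bm{\beta}}=\frac{1}{t}\sum_i(\bm{X}_i'\bm{X}_i)^{-1}\bm{X}_i'\bm{y}_i$ exhibits $\hat{\bm{\beta}}$ as a linear function of the responses in $PS$, and \emph{Lemma 2} shows it is unbiased; since $\tilde{\bm{\beta}}$ is the ordinary least-square estimator on $PS$, it is the best linear unbiased estimator there, so its covariance is dominated by that of every competing linear unbiased estimator, $\hat{\bm{\beta}}$ included. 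The purely algebraic route instead invokes operator convexity of $A\mapsto A^{-1}$ on positive-definite matrices, which yields $(\frac{1}{t}\sum_i \bm{X}_i'\bm{X}_i)^{-1}\preceq\frac{1}{t}\sum_i(\bm{X}_i'\bm{X}_i)^{-1}$; since the left side equals $t(\sum_i \bm{X}_i'\bm{X}_i)^{-1}$, rearranging gives exactly the desired inequality. Reading off the diagonal entries, the marginal standard deviation $\tilde{\nu}_j$ of each $\tilde{\beta}_j$ satisfies $\tilde{\nu}_j\le\nu_j$, hence $\max_j\tilde{\nu}_j\le\max_j\nu_j=\nu$.

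It remains to transfer inequality (1). Because $\tilde{\bm{\beta}}$ is a linear image of the independent Gaussian noises, each $\tilde{\beta}_j$ is normal with mean $\beta_j$ and standard deviation $\tilde{\nu}_j$, so the estimate of $\mathrm{Pr}\{\max_j|\cdot-\beta_j|\ge\epsilon\}$ produced in the proof of \emph{Theorem 1} applies verbatim to $\tilde{\bm{\beta}}$ with each $\nu_j$ replaced by the smaller $\tilde{\nu}_j$. Since that estimate is monotone increasing in the standard deviations and inequality (1) for $\hat{\bm{\beta}}$ is secured through the sufficient condition $\Phi(\epsilon\sqrt{t}/\nu)\ge(2-\delta)/2$ of \emph{Theorem 1}, shrinking $\nu$ to $\tilde{\nu}\le\nu$ only strengthens that condition, so (1) holds for $\tilde{\bm{\beta}}$ as well.

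I expect the main obstacle to be the covariance comparison of the middle paragraph, and in particular keeping the tail transfer honest: one must lean on the monotone \emph{sufficient} condition of \emph{Theorem 1} rather than treating (1) as an opaque hypothesis, since replacing $\nu$ by $\tilde{\nu}$ inside a sufficient bound is what keeps the implication valid even though the joint correlation structures of $\tilde{\bm{\beta}}$ and $\hat{\bm{\beta}}$ differ. The stacking identity and the normality of $\tilde{\bm{\beta}}$ are routine by comparison.
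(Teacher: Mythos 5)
Your proposal is correct and follows essentially the same route as the paper's own proof in Appendix C: invoke Gauss--Markov to get that the least-square estimator on $PS$ has (coordinate-wise) no larger variance than the averaged estimator $\hat{\bm{\beta}}$, observe that both are Gaussian linear images of the noise centered at $\bm{\beta}$, and conclude by monotonicity of the normal tail in the standard deviation. Your additional algebraic route via operator convexity of the matrix inverse, and your care about transferring the bound through the monotone sufficient condition of \emph{Theorem 1} rather than coordinate-wise tails alone, are refinements the paper does not spell out, but the core argument is identical.
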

\begin{proof}
	The proof of \emph{Theorem 2} is shown in Appendix C.
\qed\end{proof}

From \emph{Theorem 2}, the \emph{Cons}-$\hat{f}$ can be simplified to the following \emph{Cons}-$\hat{f}$-\emph{New}, which is used in \emph{Algorithm 3} in Section 4. 

Step 1: Sample $PS$ from $\bm{D}$, where $|PS| = n_s$.

Step 2: Use least square method to construct the parameters of linear regression model $\hat{\bm{\beta}} = (\hat{\beta}_1,\hat{\beta}_2, \cdots, \hat{\beta}_k)$.

Finally, the following \emph{Theorem 3} shows that $n_s$ is not necessary to be large.

\begin{theorem}
	There exists an $n_s = O(\frac{1}{\epsilon^2})$ such that $$\mathrm{Pr}\Big\{\max \limits_j  \lvert \hat{\beta}_j -\beta_j \rvert \ge \epsilon \Big\} < 10^{-6}.$$
\end{theorem}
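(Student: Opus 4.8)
The plan is to obtain Theorem 3 as a direct quantitative consequence of Theorem 1, by fixing the confidence level and inverting the resulting Gaussian-tail condition for the sample size. First I would set $\delta = 10^{-6}$ in the hypothesis of Theorem 1, so that the requirement $\Phi(\frac{\epsilon \sqrt{t}}{\nu}) \geq \frac{2-\delta}{2}$ becomes $\Phi(\frac{\epsilon \sqrt{t}}{\nu}) \geq 1 - 5\times 10^{-7}$. Since $1 - 5\times 10^{-7} < 1$, the number $z := \Phi^{-1}(1 - 5\times 10^{-7})$ is a finite absolute constant (numerically $z \approx 4.9$), and the condition is equivalent to $\frac{\epsilon \sqrt{t}}{\nu} \geq z$, i.e. $t \geq z^2 \nu^2 / \epsilon^2$.

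The crucial observation is that $\nu$ depends on neither $\epsilon$ nor $t$. Recall that $\nu$ is the largest standard deviation among the single-block least-square estimators $\hat{\bm{\beta}}^{(i)}$, each built from a block $PS_i$ of the fixed size $p = k+2$; by Lemma 1 its square equals $\sigma^2 [(\bm{X}_i' \bm{X}_i)^{-1}]_{jj}$, a quantity governed only by $\sigma$ and the fixed sampling distribution of the design points, and in particular independent of the accuracy parameter $\epsilon$ and of the number of blocks $t$. The explicit factor $\sqrt{t}$ appearing in Theorem 1 already encodes the variance reduction from averaging the $t$ blocks, so $\nu$ is genuinely a per-block constant and must be treated as such here.

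With $\nu$ and $z$ both constant, I would then take $t = \lceil z^2 \nu^2 / \epsilon^2 \rceil$. This choice satisfies the hypothesis of Theorem 1 with $\delta = 10^{-6}$ and therefore yields $\mathrm{Pr}\{\max_j |\hat{\beta}_j - \beta_j| \geq \epsilon\} \leq 10^{-6}$ (the strict inequality follows from the slack created by rounding up, or from enlarging $t$ by one unit). Since $n_s = pt = (k+2)t$, the constructed sample size is $n_s = (k+2)\lceil z^2 \nu^2 / \epsilon^2 \rceil = O(1/\epsilon^2)$, as claimed, with the hidden constant absorbing $k$, $\sigma$, $z$ and $\nu$. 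By Theorem 2 the same $n_s$ also makes the direct least-square estimator of the simplified Cons-$\hat{f}$-New meet inequality \emph{(1)}, so the bound transfers to the construction actually used by the algorithm.

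The one point requiring care — and the step I expect to be the main obstacle — is the rigorous justification that $\nu$ is a bounded constant, uniform over the relevant blocks. Because each $PS_i$ contains only $p = k+2$ points, $\bm{X}_i' \bm{X}_i$ is a $k\times k$ Gram matrix assembled from the minimum number of points needed for invertibility; one must argue, using the i.i.d. uniform sampling assumption, that its smallest eigenvalue is bounded away from $0$ (almost surely, or with overwhelming probability), so that $[(\bm{X}_i' \bm{X}_i)^{-1}]_{jj}$ and hence $\nu$ stay finite and do not blow up on near-degenerate samples. Once this uniform control of $\nu$ is secured, the remainder is the elementary inversion of the tail condition above, and the $O(1/\epsilon^2)$ rate falls out immediately from the $\epsilon$-independence of $\nu$ and $z$.
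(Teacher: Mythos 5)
Your proposal is correct and follows essentially the same route as the paper's own proof: fix $\delta=10^{-6}$ in Theorem 1, invert the Gaussian tail condition to get $t \geq z^2\nu^2/\epsilon^2$ with $z=\Phi^{-1}(1-5\times 10^{-7})$ (the paper just hard-codes $z=5.33$ and sets $n_s = 28.4089\,p\nu^2/\epsilon^2$), and conclude $n_s = pt = O(1/\epsilon^2)$ by treating $p$ and $\nu$ as constants. Your closing caveat that the $\epsilon$- and $t$-independence (and finiteness) of the per-block deviation $\nu$ needs justification is a fair point that the paper simply assumes without comment, so your argument is, if anything, more carefully hedged than the original.
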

\begin{proof}
	The proof of \emph{Theorem 3} is shown in Appendix D.
\qed\end{proof}

\subsection{Theorem related to the Measures of Subsets}
By \emph{Algorithm 1}, MMLR constructs $\hat{f}_i$ using subsets $H_i \cap \bm{DS} \subseteq \bm{DS}$ initially, where $H_i$ is a hypercube whose edges are parallel to coordinate axis of $\mathbb{R}^k$. Subsection 3.1 shows that using a subset $PS$ randomly sampled from $H_i \cap \bm{DS}$ to construct $\hat{f}_i$ can be very accurate. However, the measures of $H_i$ is also a key factor influencing the accuracy of $\hat{f}_i$. Intuitively, the larger $H_i$ is, the more accurate $\hat{f}_i$ is. However, the $H_i$ is not necessary to be very large when $\epsilon$ and $\delta$ are given. This subsection discusses the necessary measure of $H_i$ to satisfy inequality (1). The following subsection has the same mathematical assumptions as section 3.1 . 
%that $y_i = f(\bm{x}_i) = \beta_0 + \beta_1 x_{i1} + \cdots + \beta_k x_{ik} + \varepsilon_i$ for each $(x_{i1}, x_{i2}, \cdots , x_{ik}, y_i) \in \bm{D}$, $|\bm{D}| = n$, $\bm{X}$ is the data matrix of $\bm{D}$, $HS = H \cap \bm{D}$, $\hat{\bm{\beta}} = (\hat{\beta}_0,\cdots,\hat{\beta}_k)$ is the least square estimator of $\bm{\beta}$ by $\hat{\bm{\beta}} = (\bm{X'}\bm{X})^{-1}\bm{X'}\bm{y}$, and data points are uniformly distributed in $H$.

Limited by the length of the paper, we can only show the following necessary conclusions, the proofs of them and other intermediate results \emph{Lemma 3.1-3.4} are shown in Appendix E.

\begin{lemma}
	%Suppose $\bm{X}_j, H_j, L, \lambda_1$ are defined the same way as Lemma 3.10, 
	Given $\epsilon > 0, 0 < \delta <1$, then for any $\epsilon' >  0$, there exists an $n_0$ such that when $L \geq \frac{4 \sqrt{3} \sigma}{\epsilon \sqrt{n \delta}}$ and $n > n_0$, $\mathrm{Pr}\{ |\hat{\beta}_j - \beta_j| \geq \epsilon \} \leq \delta$ holds with possibility no less than $1-\epsilon'$. Further, $\mathbb{E}(|\hat{\beta}_j - \beta_j|)$ is monotonically decreasing at $nL^2$.
\end{lemma}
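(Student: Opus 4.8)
The plan is to disentangle the two sources of randomness present in the statement --- the Gaussian noise $\varepsilon_i$ and the random locations $\bm{x}_i$ --- and treat them in turn. First I would condition on the data matrix $\bm{X}$. By \emph{Lemma 1} the least-square estimator is unbiased with $\mathrm{Var}(\hat{\bm{\beta}}\mid\bm{X})=\sigma^2(\bm{X}'\bm{X})^{-1}$, so coordinate-wise $\mathrm{Var}(\hat{\beta}_j\mid\bm{X})=\sigma^2\,[(\bm{X}'\bm{X})^{-1}]_{jj}$. Since the noise has mean zero, Chebyshev's inequality applied to the conditional law yields
\[
\mathrm{Pr}\Big\{|\hat{\beta}_j-\beta_j|\ge\epsilon \;\Big|\; \bm{X}\Big\}\le \frac{\sigma^2\,[(\bm{X}'\bm{X})^{-1}]_{jj}}{\epsilon^2},
\]
so the whole problem is reduced to showing that the diagonal entry $[(\bm{X}'\bm{X})^{-1}]_{jj}$ is small with high probability over the sampling of the $\bm{x}_i$. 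The $1/\sqrt{\delta}$ dependence of $L$ in the hypothesis is the signature of this Chebyshev step rather than of a sharper Gaussian tail bound.

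Second, I would quantify $[(\bm{X}'\bm{X})^{-1}]_{jj}$. Because $\bm{DS}$ is centralized and the $\bm{x}_i$ are i.i.d. uniform on a hypercube of side $L$ (hence on $[-L/2,L/2]$ in each coordinate), the population second-moment matrix is $\Sigma=\frac{L^2}{12}I$. The weak law of large numbers then gives $\frac1n\bm{X}'\bm{X}\to\Sigma$ in probability, so for every $\epsilon'>0$ there is an $n_0$ such that for $n>n_0$, with probability at least $1-\epsilon'$ the matrix $\frac1n\bm{X}'\bm{X}$ is close enough to $\Sigma$ that $[(\bm{X}'\bm{X})^{-1}]_{jj}\le \frac{48}{nL^2}$. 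The inflation from the limiting value $\frac{12}{nL^2}$ to $\frac{48}{nL^2}$ is exactly the factor-$4$ cushion absorbed by the hypothesis $L\ge \frac{4\sqrt3\,\sigma}{\epsilon\sqrt{n\delta}}$, i.e. $L^2\ge\frac{48\sigma^2}{n\epsilon^2\delta}$. Substituting back, on this good event $\mathrm{Pr}\{|\hat{\beta}_j-\beta_j|\ge\epsilon\mid\bm{X}\}\le \frac{48\sigma^2}{nL^2\epsilon^2}\le\delta$, which is the assertion.

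For the monotonicity claim I would use that the conditional law of $\hat{\beta}_j-\beta_j$ is centered normal with standard deviation $\sigma\sqrt{[(\bm{X}'\bm{X})^{-1}]_{jj}}$, so the folded-normal identity gives $\mathbb{E}(|\hat{\beta}_j-\beta_j|\mid\bm{X})=\sqrt{2/\pi}\,\sigma\sqrt{[(\bm{X}'\bm{X})^{-1}]_{jj}}$. A scaling observation --- replacing $\bm{x}_i$ by $c\bm{x}_i$ multiplies $(\bm{X}'\bm{X})^{-1}$ by $c^{-2}$, while $\bm{X}'\bm{X}$ grows linearly in $n$ --- shows that to leading order $[(\bm{X}'\bm{X})^{-1}]_{jj}\approx\frac{12}{nL^2}$, hence $\mathbb{E}(|\hat{\beta}_j-\beta_j|)\approx\sqrt{24/\pi}\,\sigma/\sqrt{nL^2}$, a strictly decreasing function of the single quantity $nL^2$.

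The hard part will be the second step: transferring law-of-large-numbers control of $\frac1n\bm{X}'\bm{X}$ to control of the diagonal of its \emph{inverse}, since inversion is nonlinear. I would route through a lower bound on the smallest eigenvalue of $\bm{X}'\bm{X}$ (equivalently an operator-norm bound on its inverse), which is presumably where the deferred \emph{Lemma 3.1--3.4} do the real work; the genuinely delicate point is calibrating the constants so that precisely the factor $4\sqrt3$ emerges rather than some larger, less tight constant.
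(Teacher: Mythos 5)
Your proposal is correct and its outer skeleton coincides with the paper's: unbiasedness from \emph{Lemma 1}, a Chebyshev bound $\mathrm{Pr}\{|\hat{\beta}_j-\beta_j|\ge\epsilon\mid\bm{X}\}\le \mathrm{Var}(\hat{\beta}_j\mid\bm{X})/\epsilon^2$, a high-probability event on which $\mathrm{Var}(\hat{\beta}_j)\le \frac{48\sigma^2}{nL^2}$, and then solving $\frac{48\sigma^2}{\epsilon^2 nL^2}\le\delta$ for $L$, which is exactly where $\frac{4\sqrt3\,\sigma}{\epsilon\sqrt{n\delta}}$ comes from. Where you genuinely diverge is the middle step, the one you correctly flag as the hard part. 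You control $[(\bm{X}'\bm{X})^{-1}]_{jj}$ globally, via the weak law of large numbers applied to $\frac1n\bm{X}'\bm{X}\to\frac{L^2}{12}\bm{I}$ and a lower bound on the smallest eigenvalue. The paper instead uses the exact partial-regression identity $[(\bm{X}'\bm{X})^{-1}]_{jj}=(\bm{x}_j'\bm{A}\bm{x}_j)^{-1}$ with $\bm{A}=\bm{I}-\bm{X}_j(\bm{X}_j'\bm{X}_j)^{-1}\bm{X}_j'$ the idempotent annihilator of the remaining columns (its \emph{Lemma 3.4}), and then splits the quadratic form into two scalar events: $\bm{x}_j'\bm{x}_j\ge\frac{nL^2}{24}$ by a direct Chebyshev computation on $\sum x_{ij}^2$ (\emph{Lemma 3.2}, contributing the factor $24$), and $\bm{x}_j'\bm{A}\bm{x}_j\ge\frac12\bm{x}_j'\bm{x}_j$ by a geometric argument about projections of a uniformly rotated vector (\emph{Lemma 3.3}, contributing the remaining factor $2$); so the paper's $48$ is $24\times 2$ rather than your single factor-$4$ cushion on the limit $\frac{12}{nL^2}$, though the two are numerically interchangeable. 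Your route buys a cleaner and more standard argument that sidesteps the delicate surface-area estimate in \emph{Lemma 3.3}, at the cost of needing uniform (operator-norm) control of the whole matrix rather than just the single diagonal entry of the inverse that the lemma actually requires; the paper's route is coordinate-local but leans on the idempotent-projection machinery. On the monotonicity claim your folded-normal identity $\mathbb{E}|\hat{\beta}_j-\beta_j|=\sqrt{2/\pi}\,\sigma\sqrt{[(\bm{X}'\bm{X})^{-1}]_{jj}}$ is actually more explicit than the paper's one-line remark that the variance decreases in $nL^2$.
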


The Lemmas in this section show that for any $j=1,2,\cdots,k$, the value range of the $j$-th dimension of $\bm{D}$ influences the error of $\hat{\beta}_j$, and $\mathrm{Pr}\{ |\hat{\beta}_j - \beta_j| \geq \epsilon \}$ is in inverse proportion to $L^2$. It means that when $n$ is fixed, one can sample from larger value range of $\bm{D}$ to get more precise $\hat{\beta}_j$. 

Besides, the sample size $n$ has no need to be very big. The $\mathrm{Pr}\{ \bm{x}' \bm{A} \bm{x} \geq \frac{1}{2} \bm{x}' \bm{x} \}$ in \emph{Lemma 3.3} has a very fast convergence speed. When $n > 3k$, $\mathrm{Pr}\{ \bm{x}' \bm{A} \bm{x} \geq \frac{1}{2} \bm{x}' \bm{x} \}$ has already larger than 0.99. From \emph{Lemma 3.2} we could know that when $n > 65$, $\mathrm{Pr}\{\bm{x}'\bm{x} \geq \frac{nL^2}{24}\} \geq 0.95$. Since $\mathrm{Pr}\{ |\hat{\beta}_j - \beta_j| \geq \epsilon \} \leq \delta$ requires both $\bm{x}'\bm{x} \geq \frac{nL^2}{24}$ and $\bm{x}' \bm{A} \bm{x} \geq \frac{1}{2} \bm{x}' \bm{x}$, the inequality holds with possibility larger than $0.95$ when $n > \max\{65,3k\}$.

According to the discuss above, we propose the following process to construct $\hat{f}$ using subset $HS \subset \bm{D}$ with smallest $|HS|$, $Cons-H\hat{f}$ for short.

$Cons-H\hat{f}$:

Step 1: Given $n_s>\max\{65,3k\}$, calculate $L = \frac{4\sqrt{3}\sigma}{ \epsilon\sqrt{n_s \delta}}$.

Step 2: Randomly choose a data point $\bm{d} \in \bm{D}$, which satisfies $\min x_i + \frac{L}{2} \leq d_i \leq \max x_i - \frac{L}{2}$ for $i=1,2,\cdots,k$. Let $H = [d_1 - \frac{L}{2} , d_1 + \frac{L}{2}] \times \cdots \times [d_k - \frac{L}{2} , d_k + \frac{L}{2}]$

Step 3: Let $HS = \bm{D} \cap H$. If $|HS|\geq n_s$, use least square method to construct $\hat{f}$ on $HS$; else, increase $H$ till $|HS|\geq n_s$ or $|HS|=|\bm{D}|$, use least square method to construct $\hat{f}$ on $HS$.

%From $Cons-H\hat{f}$ we can know when $\sigma = 50$ and $n=2700$, the smallest value range $L \approx 67$. 
In conclusion, the following \emph{Theorem 4} is correct by the discussion above.

\begin{theorem}
	If $|\bm{DS}|\geq n_s$ and $\max x_i - \min x_i \geq L$ for $i=1,\cdots,k$, the least square estimator $\hat{f}$ constructing by $Cons-H\hat{f}$ satisfies $\mathrm{Pr}\Big\{\max \limits_j  \lvert \hat{\beta}_j -\beta_j \rvert \ge \epsilon \Big\} \le \delta$.
\end{theorem}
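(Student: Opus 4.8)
The plan is to read Theorem 4 as a correctness statement for the $Cons-H\hat{f}$ procedure rather than as a fresh probabilistic estimate: I would show that, under the two hypotheses $|\bm{DS}| \geq n_s$ and $\max x_i - \min x_i \geq L$, the procedure is always well defined and returns a sample $HS$ whose size and coordinate spread satisfy the assumptions of the preceding Lemma of this subsection, and then invoke that Lemma together with the accompanying discussion that fixes $n > \max\{65, 3k\}$. The real work is in verifying that the algorithm's output feeds the Lemma correctly and in upgrading a per-coordinate tail bound to the stated maximum.

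First I would check that Step 2 of $Cons-H\hat{f}$ is feasible. The admissible interval for the $i$-th coordinate of the center is $[\min x_i + \frac{L}{2}, \max x_i - \frac{L}{2}]$, which is nonempty exactly when $\max x_i - \min x_i \geq L$; this is the second hypothesis, so a qualifying center $\bm{d}$ exists in every coordinate and the hypercube $H$ of edge $L$ sits inside the value domain. Next I would argue that the returned $HS$ meets the size requirement: Step 3 enlarges $H$ until $|HS| \geq n_s$ or $HS = \bm{D}$, and since $|\bm{DS}| \geq n_s$ either alternative yields $|HS| \geq n_s > \max\{65, 3k\}$, while enlargement only increases the per-dimension range.

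Then I would verify the Lemma's hypotheses with $n$ instantiated as $|HS|$. The edge-length condition $L \geq \frac{4\sqrt{3}\sigma}{\epsilon\sqrt{|HS|\delta}}$ follows from the choice $L = \frac{4\sqrt{3}\sigma}{\epsilon\sqrt{n_s\delta}}$ in Step 1 together with $|HS| \geq n_s$, because the threshold decreases in the sample size (equivalently, by the monotonicity of $\mathbb{E}(|\hat{\beta}_j - \beta_j|)$ in $nL^2$ asserted at the end of that Lemma). With $|HS| > \max\{65, 3k\}$ the two events $\bm{x}'\bm{x} \geq \frac{|HS| L^2}{24}$ and $\bm{x}'\bm{A}\bm{x} \geq \frac{1}{2}\bm{x}'\bm{x}$ hold jointly with probability above $0.95$, so the Lemma applies and gives $\mathrm{Pr}\{|\hat{\beta}_j - \beta_j| \geq \epsilon\} \leq \delta$ for each fixed $j$.

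The step I expect to be the main obstacle is the passage from this per-coordinate guarantee to the stated maximum $\mathrm{Pr}\{\max_j |\hat{\beta}_j - \beta_j| \geq \epsilon\} \leq \delta$, since a naive union bound over the $k$ coordinates only delivers $k\delta$. I would resolve this in one of two ways. One option is to re-run the Lemma at the per-coordinate level $\delta/k$; because this only rescales the constant inside $L$ and leaves the order $n_s = O(1/\epsilon^2)$ of Theorem 3 unchanged, it is harmless. The cleaner option is to exploit that, by Lemma 1 and the Gaussian noise assumption, $\hat{\bm{\beta}} - \bm{\beta}$ is jointly normal with each marginal standard deviation $\nu_j = \sigma\sqrt{[(\bm{X}'\bm{X})^{-1}]_{jj}}$ controlled uniformly by the same $nL^2$ scaling; bounding the maximum tail through the largest standard deviation $\nu = \max_j \nu_j$ reproduces exactly the mechanism of Theorem 1, and the conclusion carries over. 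A secondary subtlety I would flag is that the Lemma's guarantee holds only on a favorable configuration event of probability above $0.95$, so I would state Theorem 4 as holding after conditioning on that event, or absorb its complement into the overall failure budget.
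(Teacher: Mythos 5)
Your proposal follows the same route the paper intends: the paper gives no formal proof of Theorem 4 at all, merely asserting it is ``correct by the discussion above,'' i.e.\ by Lemma 3 together with the remarks that $n>\max\{65,3k\}$ makes the events $\bm{x}'\bm{x}\geq \frac{nL^2}{24}$ and $\bm{x}'\bm{A}\bm{x}\geq\frac{1}{2}\bm{x}'\bm{x}$ hold with probability above $0.95$. Your feasibility check of Step 2, the verification that $|HS|\geq n_s$ feeds Lemma 3 with the right edge length, and the monotonicity argument under enlargement of $H$ are exactly the intended chain, just written out. Where you go beyond the paper is in flagging its two real soft spots. First, Lemma 3 is a per-coordinate bound, and passing to $\max_j$ costs a factor $k$ under a union bound; the paper silently ignores this (as does the last line of the proof of Theorem 1, which jumps from ``each $j$ satisfies the bound'' to ``the max satisfies the bound''). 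Your first fix, running Lemma 3 at level $\delta/k$, is the one that actually works, though note it inflates $L$ by $\sqrt{k}$ rather than a pure constant; your second fix, routing through the mechanism of Theorem 1 via $\nu=\max_j\nu_j$, inherits the very same unjustified step from Theorem 1 and so does not repair the gap on its own. Second, you are right that Lemma 3's conclusion only holds on a design event of probability $1-\epsilon'$ (concretely $>0.95$ for $n>\max\{65,3k\}$), so the theorem as stated should either be conditioned on that event or have $\epsilon'$ folded into $\delta$; the paper does neither. In short, your argument is the paper's argument done more honestly, and the caveats you raise are genuine defects of the stated theorem rather than of your proof.
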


\section{Algorithm and Analysis}

This section shows the pseudo-code of MMLR algorithm, as well as the details of illustrations and analysis.

\subsection{Algorithm}
Firstly, the general idea of MMLR has been shown in \emph{Section 1}. The detail of MMLR algorithm is shown in \emph{Algorithm 2}, the invoked algorithm \emph{Subset} is shown in \emph{Algorithm 3}. Specifically, MMLR uses pre-processing, pre-modelling, examine, grouping these four important phases to iteratively solving the problem. 

\begin{algorithm}[!htb]
	\label{a_approximation}
	\SetKwBlock{DoWhile}{Do}{end}
	\caption{MMLR($\bm{DS}, \epsilon, \delta, M_0, N_0$)}
	\KwIn{A $k$-dimensional dataset $\bm{DS}$ with $N$ data points, error bounds $\epsilon > 0, 0 < \delta < 1$, least subset volume $M_0$ and largest model number $N_0$;}
	\KwOut{$\bm{M}$, i.e. the approximate set of linear regression models and subsets of $\bm{DS}$ fitting them}
	$\bm{M} \leftarrow \emptyset $, $m \leftarrow 0$, $S \leftarrow \bm{DS}$ \;
	$f \leftarrow $ the linear regression model of $S$, $p_F \leftarrow $ the F-test's p-value of $f$ on $S$ \;
	\eIf{$p_F < 0.05$}{
		$\bm{M} \leftarrow \{ (f,S) \}$ \; 		
		\Return $\bm{M}$ \;
	}{
		$n \leftarrow |\bm{DS}|$, $D \leftarrow \bm{DS}$, $\sigma^2 \leftarrow estimate(\bm{DS})$ \;
		%$\lambda \leftarrow \mathrm{avg}_{(\bm{x},y) \in \bm{DS}}(x_1 + x_2 + \cdots + x_k)^2$, $\epsilon_m \leftarrow \frac{\sqrt{\epsilon}}{k \lambda}$ \;
		$n_s \leftarrow (k+1)(\frac{1}{\epsilon} \Phi^{-1}(\frac{2-\delta}{2})\sigma)^{2/3}$, $L \leftarrow \frac{4\sqrt{3}\sigma}{\epsilon \sqrt{\max \{65,3k\}}\delta}$ \;
		\textbf{While} ({$n > N_0$ and $m < M_0 - 1$})
		\DoWhile{
			$S \leftarrow \mathrm{Subset}(\bm{DS},n_s,L)$, $D \leftarrow D-S$\;
			$f \leftarrow $ linear regression model of $S$, $p_F \leftarrow $ the F-test's p-value of $f$ on $S$ \;
			\textbf{While} ({$p_F \geq 0.05$ and $D \neq \emptyset $})
			\DoWhile{
				$S \leftarrow \mathrm{Subset}(\bm{DS},n_s,L)$\;
				$f \leftarrow $ linear regression model of $S$, $p_F \leftarrow $ the F-test's p-value of $f$ on $S$ \;
			}
			$\tilde{\sigma}_f^2=\frac{(\bm{y}-\bm{X}\hat{\bm{\beta}})'(\bm{y}-\bm{X}\hat{\bm{\beta}})}{n}$, where $\bm{\beta}$ is the coefficients of $f$, $\bm{y}$ and $\bm{X}$ are response variables and data matrix of $S$ respectively, $b_f \leftarrow 3 \tilde{\sigma}_f$ \;
			\For{each $d=(\bm{x},y)$ in $\bm{DS}-S$}{
				\If{$|f(\bm{x})-y| \leq b_f$}{
					$S \leftarrow S \cup \{ d \}$ \;		
				}
			}
			$\bm{M} \leftarrow \bm{M} \cup \{(f,S)\}$, $\bm{DS} \leftarrow \bm{DS} - S$, $n \leftarrow |\bm{DS}|, m \leftarrow m+1$ \;	
		}
		$S \leftarrow \bm{DS}$, $f \leftarrow $ linear regression model of $S$ \;
		$\bm{M} \leftarrow \bm{M} \cup \{(f,S)\}$ \;
	}
	\Return $\bm{M}$\;
\end{algorithm}

%\vspace{-8mm}
\begin{algorithm}[!htb]
	\label{a_approximation}
	\SetKwBlock{DoWhile}{Do}{end}
	\caption{Sample($\bm{DS},n_s,L$)}
	\KwIn{A $k$-dimensional dataset $\bm{DS}$ with $N$ data points, smallest sample size $n_s$, value range $L$;}
	\KwOut{subset $S\subset \bm{DS}$}
	$D \leftarrow$ subset of $\bm{DS}$ that every $(\bm{x},y)$ satisfies $\min x_i + \frac{L}{2} \leq d_i \leq \max x_i - \frac{L}{2},i=1,2,\cdots,k$ \;
	$d(d_1,\cdots,d_k,y_d) \leftarrow $ randomly choose a data point from $D$ \;
	$H \leftarrow [d_1 - \frac{L}{2},d_1+\frac{L}{2}] \times \cdots \times [d_k - \frac{L}{2},d_k+\frac{L}{2}]$ , $S \leftarrow \bm{DS}\cap H$ \;
	\uIf{$|S| \geq n_s$}{
		$S \leftarrow$ uniformly randomly choose $\min \{n_s,\max\{65,3k\}\}$ from $S$ \;		
		\Return $S$ \;
	}
	\uElseIf{$|S| < \max\{65,3k\}$}{
		\textbf{While} ({$|S| < \max\{65,3k\}$})
		\DoWhile{
			$j \leftarrow $ randomly choose from $1,2,\cdots,k$ \;
			$L_j \leftarrow \frac{\max\{65,3k\}}{|S|} L$ \;
			\eIf{ $d_j + \frac{L_j}{2} > \max x_j$ }{
				$H_j \leftarrow [d_j - \frac{L_j}{2},d_j+\frac{L}{2}]$ \;}{
				$H_j \leftarrow [d_j - \frac{L}{2},d_j+\frac{L_j}{2}]$ \;}
			$S \leftarrow \bm{DS}\cap H$ \;
		}
	}
	\uElse{
		\Return $S$ \;
	}
\end{algorithm}

Line 1-8 are the pre-processing phase of MMLR. Line 1-5 construct a linear regression model on the whole dataset $\bm{DS}$. If the regression model is precise enough, there's no need to use multiple models fitting $\bm{DS}$. MMLR uses the index $p_F$ to determine whether one linear model is enough. If not, MMLR would begin to construct multiple model $\bm{M}$. In line 7-8, MMLR firstly calculate the smallest sample size using an estimate of $\sigma^2$. There are already methods to precisely get $\sigma^2$'s estimate \cite{book2021lra}. Suppose that algorithm $estimate(\bm{DS})$ take a dataset $\bm{DS}$ as input and output $\sigma^2$, MMLR can use anyone of them, which is shown in Line 7. Thus MMLR could calculate $n_s$ and $L$ for further work in Line 8.

Line 9-19 is an iteration to construct every $\hat{f}_i$ and related $S_i$. Generally, MMLR samples small subsets to construct local models then finds the data points fitting them. When every iteration ends, MMLR abandons those data points from $\bm{DS}$. The terminal condition is that when $|\bm{DS}| \leq n_0$ or current $m = M_0 - 1$. At this time, the data points left would be marked as $S_{m+1}$.

Line 10-14 is pre-modelling phase. In this part, MMLR firstly prudently chooses a small area of the whole value range and samples from the data points in this area invoking $\mathrm{Subset}(\bm{DS},n_s,L)$ in Line 10, 13. $S$ is the sampled subset, and $D$ denotes the rest part. After that MMLR construct a regression model $f$ and get its statistical characteristic. By \emph{Theorem 2} and \emph{4}, $f$ is a highly precise model, of which $p_F$ should be small enough. If not, $S$ does not fit one linear model, which means MMLR should sample again in Line 12-14. 

Line 15-18 is the examine phase. In Line 15, MMLR firstly gives the fitting bound $b_f$ of $f$. The fitting bound means that the max prediction error of a data point $(\bm{x},y)$ if it fits $f$. So MMLR can figure out $(\bm{x},y)$ fitting $f$ if $|f(\bm{x})-y| \leq b_f$. If a data point $(\bm{x},y)$ fit $f$, $y-f(\bm{x}) \sim N(0,\sigma_f^2)$. As shown in \cite{book2021lra}, $\tilde{\sigma}_f^2=\frac{(\bm{y}-\bm{X}\hat{\bm{\beta}})'(\bm{y}-\bm{X}\hat{\bm{\beta}})}{n}$ is a unbiased estimator of $\sigma_f^2$. According to the characteristics of normal distribution, MMLR chooses $3 \tilde{\sigma}_f$ as the fitting bound of $f$, since $\Pr \{|\xi - 0| \leq 3 \sigma\}\approx 0.99$ when $\xi \sim N(0,\sigma^2)$. MMLR tests all data points that are not assigned into existing model's acting scope by checking whether $f(\bm{x})-y \leq 3 \tilde{\sigma_f}$. %This is shown in Line 22-24. 
After getting all data points that belongs to this one linear model, MMLR updates $\bm{M}$, and $\bm{DS}$ by deleting those points in Line 19.

MMLR iteratively carries out the pre-modelling phase and examine phase until $|\bm{DS}|$ is small enough. When $|\bm{DS}| < N_0$ or the number of models $m=M_0-1$, the iteration stops. The current $\bm{DS}$ and the linear regression model of it will be settled as $(f_m,S_m)$ and added into $\bm{M}$. So far, MMLR gets a solution of the Approximately Optimal MMLR Problem. 

\subsection{Analysis}
By the steps of \emph{Algorithm 2}, \emph{Theorem 2} and \emph{4}, the correctness of \emph{Algorithm 2} is given as the following theorem.

\begin{theorem}
	The $\bm{M} = \{(\hat{f}_i,S_i)|1\leq i \leq m\}$ constructed by \emph{Algorithm 2} satisfies $ \mathrm{Pr} \{ \max\limits_i |\hat{f}_i - \hat{g}_i| \geq \epsilon \} \leq \delta$, where $\bm{M}_{opt} = \{(\hat{g}_i,S_i) | 1 \leq i \leq m\}$.
\end{theorem}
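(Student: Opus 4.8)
The plan is to reduce the global guarantee $\mathrm{Pr}\{\max_i |\hat{f}_i - \hat{g}_i| \geq \epsilon\} \leq \delta$ to a single per-model statement closed off by a union bound. Since the subsets $S_1,\dots,S_m$ produced by \emph{Algorithm 2} are \emph{exactly} the subsets on which the reference models $\hat{g}_i$ of $\bm{M}_{opt}$ are fitted, it suffices to control $|\hat{f}_i - \hat{g}_i|$ for each fixed $i$ and then sum the failure probabilities over $i=1,\dots,m\leq M_0$. Fixing $i$, I would introduce the coefficient vector $\bm{\beta}^{(i)}$ of the single linear relationship that governs $S_i$; the examine phase of Lines 15--18, which retains only points with $|f(\bm{x})-y|\leq b_f = 3\tilde{\sigma}_f$, is precisely what ensures every point of $S_i$ is consistent with one linear model, so that both $\hat{f}_i$ and $\hat{g}_i$ are estimators of the same $\bm{\beta}^{(i)}$. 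I then split, in the $\|\cdot\|_\infty$ metric used to define $|f_1-f_2|$,
\[
|\hat{f}_i - \hat{g}_i| \leq |\hat{f}_i - \bm{\beta}^{(i)}| + |\bm{\beta}^{(i)} - \hat{g}_i|.
\]

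For the first term, $\hat{f}_i$ is the least-square fit on the sample returned by $\mathrm{Subset}(\bm{DS},n_s,L)$; by the choice of $n_s$ and $L$ in Line 8 together with the $\mathrm{Subset}$ procedure, this sample meets the hypotheses of \emph{Theorem 4} (size at least $n_s>\max\{65,3k\}$ and per-coordinate range at least $L$). \emph{Theorem 4}, combined with \emph{Theorem 2} (the plain least-square estimator on the sample inherits inequality~(1)), bounds $\mathrm{Pr}\{|\hat{f}_i-\bm{\beta}^{(i)}|\geq \epsilon/2\}$. For the second term, $\hat{g}_i$ is the full least-square fit on $S_i$, which contains the sample; by \emph{Lemma 1} it is unbiased for $\bm{\beta}^{(i)}$, and since $S_i$ has at least as large a volume and per-coordinate range as the sample, the monotonic decrease of $\mathbb{E}(|\hat{\beta}_j-\beta_j|)$ in $nL^2$ established in Section 3.2 shows its estimation error is no larger, so the same tail bound applies. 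Using \emph{Theorem 3} I would drive each per-model, per-term failure probability below $10^{-6}$, so that $|\hat{f}_i-\hat{g}_i|<\epsilon$ fails with probability at most $2\cdot 10^{-6}$ for each $i$; the union bound over the $m\leq M_0$ models then yields $\mathrm{Pr}\{\max_i |\hat{f}_i-\hat{g}_i|\geq \epsilon\}\leq 2M_0\cdot 10^{-6}\leq \delta$ across the ranges of $M_0$ and $\delta$ of practical interest.

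The hardest part will be the second term, because $\hat{f}_i$ and $\hat{g}_i$ are not independent: the sample feeding $\hat{f}_i$ is a subset of $S_i$, so the triangle-inequality split must be justified distributionally rather than merely asserted, and I must argue that enlarging the fitting set from the sample to all of $S_i$ cannot degrade accuracy, which is exactly what the $nL^2$-monotonicity of Section 3.2 supplies. A secondary subtlety is the constant bookkeeping: \emph{Algorithm 2} fixes $n_s$ and $L$ in terms of $(\epsilon,\delta)$ rather than $(\epsilon/2,\delta/m)$, so the argument must lean on the extremely small $10^{-6}$ tail of \emph{Theorem 3} to absorb both the factor of two in $\epsilon$ from the triangle inequality and the factor $m$ from the union bound, instead of on a tight $(\epsilon,\delta)$ accounting. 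Care is also needed to discharge the modelling premise that each grown $S_i$ is genuinely governed by a single $\bm{\beta}^{(i)}$, since this is what lets the two estimators share a common target.
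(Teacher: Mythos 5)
The paper offers essentially no written proof of this theorem: it is asserted to follow ``by the steps of \emph{Algorithm 2}, \emph{Theorem 2} and \emph{4},'' so your attempt is necessarily more explicit than the original. Your skeleton --- per-model control of $|\hat{f}_i-\hat{g}_i|$ via a triangle inequality through the common target $\bm{\beta}^{(i)}$, with Theorems 2 and 4 bounding the sample-based estimator, a domination argument bounding the full-subset estimator, and a union bound over the $m$ models --- is exactly the argument the paper is gesturing at, and you correctly surface the issues the paper hides: that $\hat{f}_i$ and $\hat{g}_i$ share a target only because the examine phase filters by $|f(\bm{x})-y|\leq 3\tilde{\sigma}_f$, and that aggregating over models costs a factor of $m$. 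One simplification: for the second arm of your split you do not need the $nL^2$-monotonicity of Section 3.2, which carries the extra distributional assumptions and the ``$1-\epsilon'$'' caveat of \emph{Lemma 3}. The Gauss--Markov argument in the paper's own proof of \emph{Theorem 2} applies verbatim with $PS$ replaced by $S_i\supseteq PS$: viewed as an estimator built from the data of $S_i$, $\hat{f}_i$ is linear and unbiased (it puts zero weight on $S_i\setminus PS$), so the least-squares fit $\hat{g}_i$ on $S_i$ has no larger variance, both errors are centered normals, and $\mathrm{Pr}\{|\hat{g}_{ij}-\beta^{(i)}_j|\geq\epsilon\}\leq\mathrm{Pr}\{|\hat{f}_{ij}-\beta^{(i)}_j|\geq\epsilon\}$.

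The genuine gap is your closing step. The theorem claims $\mathrm{Pr}\{\max_i|\hat{f}_i-\hat{g}_i|\geq\epsilon\}\leq\delta$ for the $\delta$ supplied to the algorithm, and the move ``each term fails with probability at most $10^{-6}$ by \emph{Theorem 3}, hence the total is $2M_0\cdot 10^{-6}\leq\delta$ across ranges of practical interest'' does not prove that statement when $\delta<2M_0\cdot10^{-6}$; moreover, Algorithm 2 does not instantiate $n_s$ at the Theorem 3 value --- Line 8 sets $n_s=(k+1)(\frac{1}{\epsilon}\Phi^{-1}(\frac{2-\delta}{2})\sigma)^{2/3}$, calibrated to the user's $(\epsilon,\delta)$, not to a $10^{-6}$ tail. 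A correct closing requires either recalibrating the per-model guarantee to level $(\epsilon/2,\,\delta/(2m))$ in $n_s$ and $L$, or using the one-sided domination above to collapse one arm of the triangle inequality so that only a $\delta/m$ adjustment remains. To be fair, the paper's one-line justification silently omits exactly the same factor of $2m$ and the factor of two in $\epsilon$, so your attempt is not weaker than the original; it is merely more candid about where the accounting fails to close.
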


At last, when the input dataset satisfies some universal assumptions, the following theorem shows the time complexity of MMLR Algorithm. The proof is shown in Appendix F.

\begin{theorem}[Time complexity of MMLR]
	Suppose that $\bm{DS}$ is uniformly distributed in a big enough value range, the value range of $\bm{DS}$ could be divided into $m \leq M_0$ continuous areas $A_1,\cdots,A_m$, $S_i = \bm{DS} \cap A_i$, $|S_i|\geq n_0$ and $S_i$ can be fitted by a linear function, then the expected time complexity of \emph{Algorithm 2} is $O(M_0(N+k^3+\frac{k^2}{\epsilon^2}))$.
\end{theorem}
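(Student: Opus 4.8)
The plan is to bound the running time by decomposing Algorithm 2 into its setup (lines 1--8), the main \textbf{While} loop (lines 9--19) that executes at most $M_0-1$ times, and the terminal model construction (lines 20--21), and then to show that the \emph{expected} cost charged to each loop iteration is $O(N + k^3 + k^2/\epsilon^2)$. First I would dispose of the setup: the pre-processing regression and F-test on the full dataset together with the variance estimate cost $O(k^2 N + k^3)$ by the pseudo-inverse bound quoted in Section 2, and I would treat this as a one-time additive term to be folded into the overall bound. Since the outer loop at line 9 terminates as soon as $m = M_0-1$ (or $n \le N_0$), the number of iterations is at most $M_0$, so once the per-iteration bound is established the final estimate follows by multiplying by $O(M_0)$.

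Next I would analyse a single iteration, splitting its cost into three pieces. The local regression at lines 11 and 13 is built on a sample $S$ whose size is $O(1/\epsilon^2)$ by \emph{Theorem 3}; applying the pseudo-inverse method to $S$ therefore costs $O(k^2\,|S| + k^3) = O(k^2/\epsilon^2 + k^3)$, and the accompanying F-test, being linear in $|S|$, is dominated. The examine phase at lines 15--18 scans each remaining point of $\bm{DS}$ once (an $O(k)$ predictor evaluation per point) and contributes the $O(N)$ term. The remaining work is the sampling routine \emph{Algorithm 3}: each call scans $\bm{DS}$ a constant number of times to build $D$, $H$, and $\bm{DS}\cap H$, and I would argue that under the uniform-distribution and big-value-range hypotheses the initial box of side $L$ already contains $\Omega(n_s)$ points, so the box-growing loop runs $O(1)$ times in expectation and one call costs $O(N)$.

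The crux, and the step I expect to be the main obstacle, is controlling the two random-repetition mechanisms so that the expected per-iteration cost stays $O(N + k^3 + k^2/\epsilon^2)$ rather than being inflated by the number of resamples. The inner \textbf{While} at lines 12--14 repeats until the sampled subset passes $p_F < 0.05$. Here I would exploit the hypothesis that the value range decomposes into $m\le M_0$ continuous areas $A_1,\dots,A_m$, each fittable by a single linear function, to show that a random hypercube $H$ of side $L$ lands entirely inside one $A_i$ with probability bounded away from $0$: because the range is ``big enough'', the measure of the boundary strips where $H$ would straddle two areas is a vanishing fraction of the admissible box centres. A hypercube contained in a single $A_i$ yields, by \emph{Theorems 2} and \emph{4}, a sample fitting a linear model and hence passing the F-test with high probability, so the number of resamples is stochastically dominated by a geometric variable with constant success probability, giving $O(1)$ expected repetitions; the same boundary-measure estimate bounds the expected number of box-growing steps in \emph{Algorithm 3}. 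Combining the three pieces, each iteration costs $O(N + k^3 + k^2/\epsilon^2)$ in expectation, and multiplying by the $O(M_0)$ iterations (with the setup term absorbed) yields the claimed $O(M_0(N + k^3 + k^2/\epsilon^2))$.
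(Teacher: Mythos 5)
Your proposal is correct and follows essentially the same route as the paper's proof: a one-time $O(k^2N+k^3)$ setup, a per-iteration cost of $O(N+k^3+k^2/\epsilon^2)$ obtained from the $O(N)$ scan of the examine and sampling phases plus the $O(k^2 n_s+k^3)$ regression on a sample of size $n_s=O(1/\epsilon^2)$, and a multiplication by the at most $M_0$ iterations. The paper simply asserts that the probability $p$ of landing a hypercube inside a single area is bounded well away from zero (so the expected number of resamples is $1/p=O(1)$), whereas you justify this with the boundary-strip/geometric-trials argument --- a slightly more careful version of the same step.
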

Such assumptions are common in low dimension situations\cite{arumugam2003emprr,diakonikolas2020sr,lokshtanov2021ppoly}, such as $k\leq 7$. Besides, for many datasets, one can control the value of $\sigma$ and $L$ under prudent normalization of $\bm{DS}$, so as to make $\bm{DS}$ satisfy the assumptions. Several experiment results on both synthetic and real-world datasets are shown in Appendix G.

\section{Conclusion and Future Work}
This paper introduces a new data analysis method using multiple-model linear regression, called MMLR. This paper gives the approximate MMLR algorithm and related mathematical proofs. MMLR has the advantages of high interpretability, high predicting precision and high efficiency of model constructing. It can deal with \emph{DPRVR} of big datasets, and the expected time complexity under some universal assumptions is $O(M_0(N+k^3+\frac{k^2}{\epsilon^2}))$, which is lower than the existing segmented regression methods.

However, there are still challenges and future work of multiple-model regression.
Firstly, the linear model could be replaced by any parametric models. Since they also have high interpretability and low time cost to construct. The conclusions of smallest sample size and measures should be calculated in another way, which is a challenge of mathematical reasoning.
Secondly, a more versatile algorithm of choosing subsets is required. Since several datasets might not satisfy the assumption of \emph{Theorem 6}, and normalization is not enough to make the Algorithms work efficiently, a more flexibly sampling method might help.
Lastly, when the dimension of $\bm{DS}$ is too high($k>10$), MMLR algorithm is not suitable since there's always no enough data points in $H$. Some dimensional reduction methods might mitigate the problem.

In conclusion,multiple model regression methodology has the potential to make great contribution to data analysis, and need more attention on its corresponding problems.

%
% ---- Bibliography ----
%
% BibTeX users should specify bibliography style 'splncs04'.
% References will then be sorted and formatted in the correct style.
%
\bibliographystyle{splncs04}
\bibliography{reference}
%
\begin{comment}

\end{comment}

\clearpage
\section*{Appendix}
\appendix

\section{Illustrations of DPRVR}
The following Figure 1 and 2 give an illustration and a real world example of DPRVR.
 
As shown in Figure 1, given a 3-dimensional $\bm{DS}=\{(y, x_1 , x_2)\} \subset [0, 8] \times [0, 8]$, we can use three linear regression models, $y=x_1+x_2$ for $R_1= \{ (x_1,x_2) | x_1^2+x_2^2 \leq 4  \}$, $y=x_1-x_2$ for $R_2 = [4,8] \times [4,6]$ and $y=0$ for $R_3 = [0, 8] \times [0, 8]- R_1 - R_2 $, to accurately model $\bm{DS}$. Since the three models are very different, it is impossible to model $\bm{DS}$ using any single model as accurate as the three models do.   
Thus, it's obvious that using multiple regression models to model a big dataset is more accurate than using one linear model in general.

\begin{figure}
	\centering
	\includegraphics[width=0.4\linewidth]{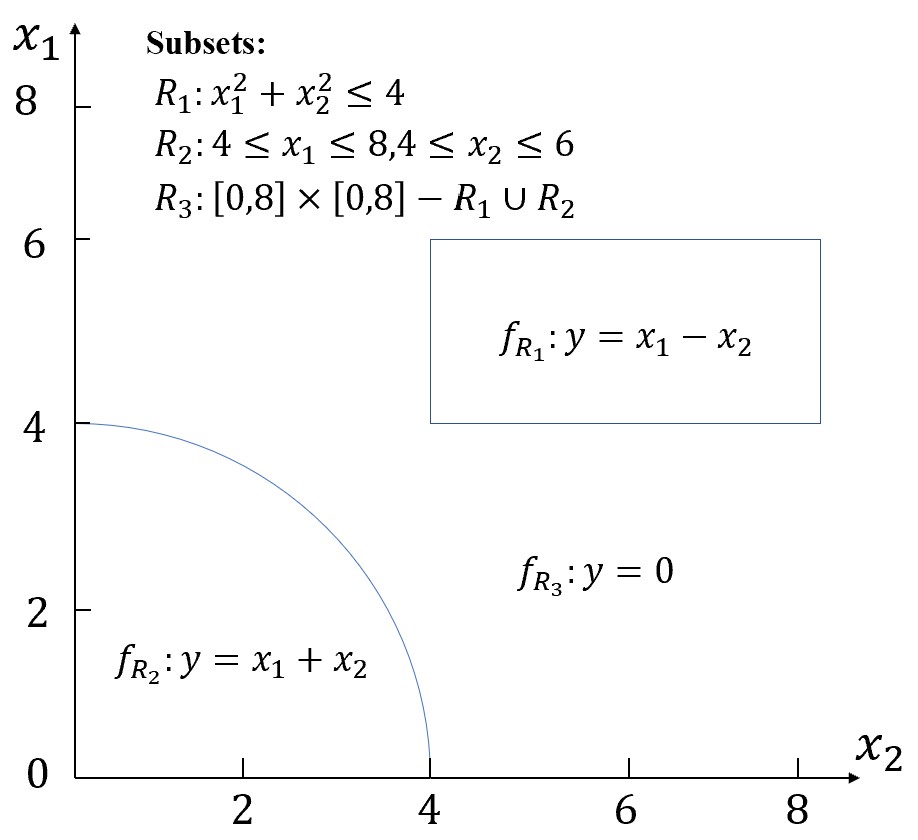}
	\caption{An illustration of diverse predictor-response variable relationships. Modelling response variable $y$ with explain variables $x_1$ and $x_2$, three local models with three regimes used to best fit the dataset.}
	\label{fig:DPRV}
\end{figure}

Figure 2 is an example of TBI dataset, given in \cite{TBI2005}, which is used to predict traumatic brain injury patients' response with sixteen explanatory variables $x_i , 1 \leq i \leq 16$. In \cite{dong2015pattern}, researchers analyzed this data with both one linear model and several. As shown in Figure 2, the $RMSE$ is 10.45 when one linear regression model is used to model the whole TBI. However the $RMSE$ is reduced to 3.51 while TBI is divided into 7 subsets and 7 different linear regression models are used to model the 7 subsets individually. 
The \emph{goodness of fit}($R^2$ for short) of the one linear regression model for modelling TBI is 0.29. But when using 7 linear regression models to model TBI, $R^2$ increases to $0.85$. 

\begin{figure}[htbp]
	\centering
	\includegraphics[width=0.8\linewidth]{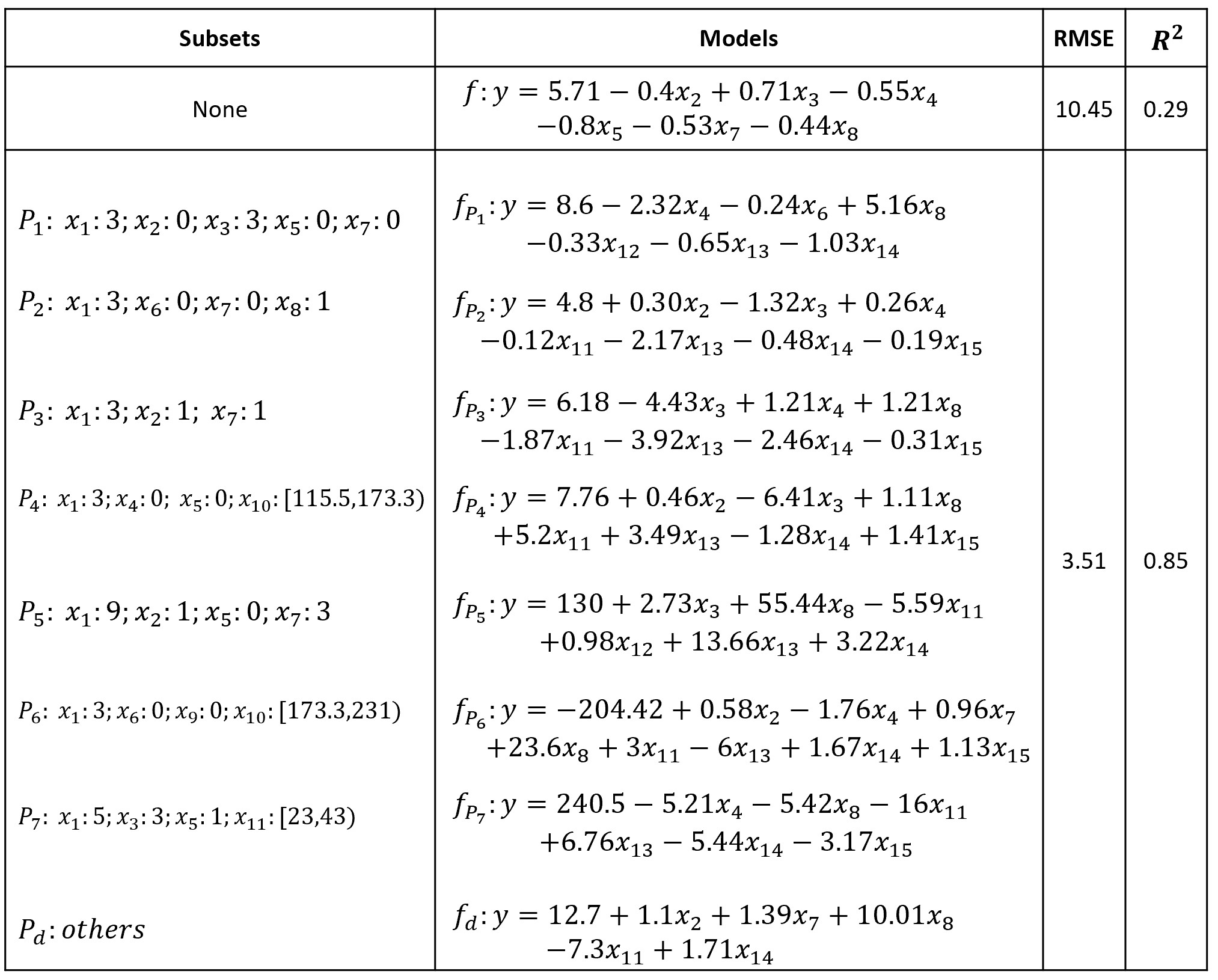}
	\caption{Using one linear regression model and 7 regression models for TBI data, has totally different prediction accuracy.}
	\label{fig:DPRV}
\end{figure}

\section{The Proof of Theorem 1}
\textbf{Theorem 1.}
	\emph{Let $\hat{\bm{\beta}}=(\hat{\beta}_0,\hat{\beta}_1,\cdots,\hat{\beta}_j)$ be constructed by the procedure of Cons-$\hat{f}$. If $\Phi(\frac{\epsilon \sqrt{t}}{\nu}) \geq \frac{2-\delta}{2}$, then $$\mathrm{Pr}\Big\{\max \limits_j  \lvert \hat{\beta}_j -\beta_j \rvert \ge \epsilon \Big\} \le \delta ,$$for any $\epsilon >0$ and $0 \leq \delta \leq 1$, where $\Phi(x)$ is the distribution function of standard normal distribution, and $\nu$ is the biggest standard deviation of all $\hat{\beta}_j$s.}
\begin{proof}
	From the construction of $\hat{\bm{\beta}}$, $$\hat{\bm{\beta}} =\frac{1}{t} \sum_{i=1}^t \hat{\bm{\beta}}^{(i)} ,$$ By \emph{Lemma 2}, we have $\mathbb{E}(\hat{\bm{\beta}})=\bm{\beta}$ and $$\mathrm{Var}(\hat{\bm{\beta}})= \frac{\sigma^2}{t^2} \sum_{i=1}^t (\bm{X}_i' \bm{X}_i)^{-1}.$$
	
	Notice that $\sum\limits_{i=1}^t (\bm{X}_i' \bm{X})_i^{-1}$ is a $(k+1)\times (k+1)$ matrix. Let $e_{pq}$ be the element in the $p$-$th$ row and $q$-$th$ column of $\sum\limits_{i=1}^t (\bm{X}_i' \bm{X}_i)^{-1}$, the $\mathrm{Var}(\hat{\bm{\beta}})$ becomes:$$\mathrm{Var}(\hat{\bm{\beta}}) = \frac{\sigma^2}{t^2} \begin{pmatrix}
		e_{00} & e_{01} & e_{02} & \cdots & e_{0k} \\
		e_{10} & e_{11} & e_{12} & \cdots & e_{1k} \\
		\vdots & \vdots & \vdots & \vdots \\
		e_{k0} & e_{k1} & e_{k2} & \cdots & e_{kk} \\
	\end{pmatrix}
	$$
	
	Let $\hat{\bm{\beta}} = (\hat{\beta_0}, \hat{\beta_1}, \cdots , \hat{\beta_k})$, then $\mathrm{Var}(\hat{\beta_j}) = \frac{\sigma^2}{t^2} e_{jj}$. Let $\nu_j = (\mathrm{Var} (\hat{\beta}_j))^{\frac{1}{2}}$. From \emph{Lemma 2}, $\nu_j = \frac{\sigma}{t} e_{jj}$. Thus, the standardized sum of $\hat{\beta_j}$ is $\zeta = \frac{1}{\nu_j \sqrt{t}} \sum\limits_{i=1}^t (\hat{\beta_j}^{(i)} - \beta_j)$ from the textbook \cite{bertsekas2008introduction}.
	
	From Lindeberg-Levi Central-Limit Theorem, 
	\begin{equation}
		\begin{aligned}
			\mathrm{Pr}\Big\{ \Big | \frac{1}{t} \sum_{i=1}^t \hat{\beta_j}^{(i)} - \beta_j \Big | < x \Big\} &=\mathrm{Pr}\Big\{ \Big | \frac{1}{\nu_j \sqrt{t}} \sum_{i=1}^t \hat{\beta_j}^{(i)} - \beta_j \Big | < \frac{x\sqrt{t}}{\nu_j} \Big\} \\
			&\cong \Phi (\frac{x\sqrt{t}}{\nu_j}) - \Phi (-\frac{x\sqrt{t}}{\nu_j}) \\
			&=2\Phi (\frac{x\sqrt{t}}{\nu_j}) -1 .
		\end{aligned}
	\end{equation}
	Since $\hat{\beta}_j = \frac{1}{t} \sum\limits_{i=1}^t \hat{\beta_j}^{(i)}$, formula (4) shows that ${\rm Pr}\{  | \hat{\beta_j} - \beta_j  | < x \} = 2\Phi(\frac{x\sqrt{t}}{\nu_j}) - 1$. Besides, ${\rm Pr}\{  | \hat{\beta_j} - \beta_j  | \geq x \} = 1 - {\rm Pr}\{  | \hat{\beta_j} - \beta_j  | < x \}$.
	Letting $x = \epsilon$, then ${\rm Pr}\{  | \hat{\beta_j} - \beta_j  | \geq \epsilon \} = 2 - 2\Phi(\frac{x\sqrt{t}}{\nu_j})$.
	$2-2\Phi(\frac{x\sqrt{t}}{\nu_j})$ is a monotonic increasing function of $\nu_j$ since $\Phi (x)$ is a monotonic increasing function of $x$.
	%Let ${\rm Pr}\{  | \hat{\beta_j} - \beta_j  | \geq \epsilon \} \leq \delta$. When $\epsilon$ and $\delta$ are fixed, the $\hat{\beta}_j$ with the biggest $\nu_j$ requires the biggest $t$ to ensure the inequation holds. 
	Letting $\nu = \max\limits_j {\nu_j}$, then $2 - 2\Phi(\frac{\epsilon \sqrt{t}}{\nu_j}) \leq 2 - 2\Phi(\frac{\epsilon \sqrt{t}}{\nu})$. From $\Phi(\frac{\epsilon \sqrt{t}}{\nu}) \geq \frac{2-\delta}{2}$, $2 - 2\Phi(\frac{\epsilon \sqrt{t}}{\nu}) \leq \delta$. Thus, all $\beta_j$s satisfies  ${\rm Pr}\{  | \hat{\beta_j} - \beta_j  | \geq \epsilon \} = 2 - 2\Phi(\frac{\epsilon \sqrt{t}}{\nu_j}) \leq \delta$. Therefore, 
	$$\mathrm{Pr}\Big\{\max \limits_j  \lvert \hat{\beta}_j -\beta_j \rvert \ge \epsilon \Big\} \le \delta ,$$for any $\epsilon >0$ and $0 \leq \delta \leq 1$.
\qed\end{proof}

\section{The Proof of Theorem 2}
Firstly, the necessary lemma to prove \emph{Theorem 2} is given as follows.

\noindent
\textbf{Lemma 2.1(Gauss-Markov).}
\emph{Suppose that the least square esitmator of $\bm{\beta}$ is $\tilde{\bm{\beta}} = (\bm{X'}\bm{X})^{-1}\bm{X'}\bm{y}$, then $\tilde{\bm{\beta}}$ has the least variance above all unbiased estimators of $\bm{\beta}$.}

Thus the \emph{Theorem 2} can be proved.
 
\noindent
\textbf{Theorem 2.}
\emph{Suppose $PS,t,n_s,\hat{\bm{\beta}} = (\hat{\beta}_1, \cdots, \hat{\beta}_k)$ are defined as in previous part, $\tilde{\bm{\beta}} = (\tilde{\beta}_1, \cdots, \tilde{\beta}_k)$ is the least square estimator of $PS$'s $\bm{\beta}$, then if $\hat{\bm{\beta}}$ satisfies inequality (1), $\tilde{\bm{\beta}}$ also satisfies inequality (1).}
\begin{proof}
	For any $0 \leq j \leq k$, let $\xi = \hat{\beta}_j - \beta_j$ and $\eta = \tilde{\beta}_j - \beta_j$ be two random variables. Thus, $\mathbb{E}(\xi ) = \mathbb{E}(\eta ) = 0$ and $\mathrm{Var}(\xi ) \geq \mathrm{Var} ( \eta)$ from \emph{Lemma 2} and \emph{Lemma 2.1}. From the expressions of $\hat{\beta}_j$ and $\tilde{\beta}_j$, they are all linear combination of all $y_i$s where $y_i \in PS$. Since $\mathbb{E} (\hat{\beta}_j ) = \mathbb{E} ( \tilde{\beta}_j ) = \beta_j$, it's reasonable to set $\hat{\beta}_j = \bm{l}_1' \bm{y} = \beta_j + \bm{l}_1'\bm{\varepsilon}$ and $\tilde{\beta}_j = \bm{l}_2' \bm{y} = \beta_j + \bm{l}_2' \bm{\varepsilon}$, where $\bm{l}_1$ and $\bm{l}_2$ are $n$-dimensional column vectors, and $\bm{l}_1', \bm{l}_2'$ are transposes of $\bm{l}_1'$ and $\bm{l}_2'$ respectively. Then $\xi = \bm{l}_1'\bm{\varepsilon}$, $\eta = \bm{l}_2' \bm{\varepsilon}$. Since every $\varepsilon_i \sim N(0,\sigma^2)$ and all $\varepsilon_i$s are independent, $\xi \sim N(0,\sigma^2_1)$, $\eta \sim N(0, \sigma^2_2)$. And $\sigma^2_2 \leq \sigma^2_1$ by \emph{Lemma 2.1}. Therefore, $\mathrm{Pr}\{ |\eta| \leq \epsilon \} \geq \mathrm{Pr}\{ |\xi| \leq \epsilon \}$, which is $\mathrm{Pr}\{ |\tilde{\beta}_j - \beta_j | \geq \epsilon \} = 1 - \mathrm{Pr}\{ |\eta| \leq \epsilon \} \leq  1 - \mathrm{Pr}\{ |\xi| \leq \epsilon \} = \mathrm{Pr}\{ |\hat{\beta}_j - \beta_j| \geq \epsilon \}$. Then when $t$ and $n_s$ satisfies the conditions of \emph{Theorem 1}, $\mathrm{Pr}\{ |\tilde{\beta}_j - \beta_j | \geq \epsilon \} \leq \mathrm{Pr}\{ |\hat{\beta}_j - \beta_j| \geq \epsilon \} \leq \delta$.
\qed\end{proof}

\section{The Proof of Theorem 3}

\textbf{Theorem 3.}
There exists an $n_s = O(\frac{1}{\epsilon^2})$ such that $$\mathrm{Pr}\Big\{\max \limits_j  \lvert \hat{\beta}_j -\beta_j \rvert \ge \epsilon \Big\} < 10^{-6}.$$
\begin{proof}
	Suppose that $p$ is defined in the step 1 of constructing $\hat{f}$, and $t, \nu$ are defined as in \emph{Theorem 1}.
	Giving $\epsilon \geq 0$, let $n_s = \frac{28.4089 p \nu^2}{\epsilon^2}$. Then $t = \frac{n_s}{p} = \frac{28.4089 \nu^2}{\epsilon^2}$, $\frac{\epsilon \sqrt{t}}{\nu} = 5.33$.
	By checking the table of standard normal distribution function, when $x \geq 5.33$, $\Phi(x) \geq 0.9999995$.
	Therefore, $\Phi(\frac{\epsilon \sqrt{t}}{\nu}) > 0.9999995 = \frac{2- 10^{-6}}{2}$.
	By \emph{Theorem 1}, $\mathrm{Pr}\{ \max|\hat{\beta}_j = \beta_j| > \epsilon \} < \delta$ if $\Phi(\frac{\epsilon \sqrt{t}}{\nu}) > \frac{2-\delta}{2}$. Thus, $\mathrm{Pr}\{ \max|\hat{\beta}_j = \beta_j| > \epsilon \} < 10^{-6}$ when $\Phi(\frac{\epsilon \sqrt{t}}{\nu}) > \frac{2- 10^{-6}}{2}$, where $n_s = \frac{28.4089 p \nu^2}{\epsilon^2} = O(\frac{1}{\epsilon^2})$.
	Therefore, there exists an $n_s = O(\frac{1}{\epsilon^2})$ such that $$\mathrm{Pr} \{ \max_j |\hat{\beta}_j - \beta_j| \geq \epsilon \} \leq 10^{-6}.$$
\qed\end{proof}

\section{The Details of Lemma 3}

In this section it is assumed that $y_i = f(\bm{x}_i) = \beta_1 x_{i1} + \cdots + \beta_k x_{ik} + \varepsilon_i$ for each $(x_{i1}, x_{i2}, \cdots , x_{ik}, y_i) \in \bm{D}$, $|\bm{D}| = n$, $\bm{X}$ is the data matrix of $\bm{D}$, $HS = H \cap \bm{D}$, $H$ is a hyper cube, $\hat{\bm{\beta}} = (\hat{\beta}_1,\cdots,\hat{\beta}_k)$ is the least square estimator of $\bm{\beta}$ by $\hat{\bm{\beta}} = (\bm{X'}\bm{X})^{-1}\bm{X'}\bm{y}$, and data points are uniformly distributed in $H$. In order to prove \emph{Lemma 3}, several conclusions should be proved before.

\noindent
\textbf{Lemma 3.1.}
\emph{
	Suppose that $\bm{D}_{ps}$ $(1 \leq p \leq k , s > 0)$ is a dataset satisfies $w_i = \beta_1 z_{i1} + \cdots  + \beta_k z_{ik} + \varepsilon_i$ for each $(z_{i1}, z_{i2}, \cdots , z_{ik}, w_i) \in \bm{D}_{ps}$, $|\bm{D}_{ps}| = n$, $z_{ij} = x_{ij}$ for $j \neq p$, $z_{ij} = s x_{ij}$ for $j \neq p$, $Z$ and $\bm{w}$ are the data matrix and response variables of $\bm{D}_{ps}$ respectively, the least square estimators of $\bm{\beta}$ of $\bm{D}$ and $\bm{D}_{ps}$ are $\hat{\bm{\beta}} = (\bm{X'}\bm{X})^{-1}\bm{X'}\bm{y}$ and $\tilde{\bm{\beta}} = (\bm{Z'}\bm{Z})^{-1}\bm{Z'}\bm{w}$ respectively, then $\mathrm{Var}(\tilde{\beta_p}) = \frac{1}{s^2}\mathrm{Var}(\hat{\beta_p})$. 
}

\begin{proof}
	From the condition of \emph{Lemma 3.1}, $\bm{Z} = \bm{X} \bm{I}_{ps}$, where $\bm{I}_{ps}$ is the identity matrix whose element of $p$-th column and $p$-th row is $s$. Thus, $\bm{Z'}\bm{Z} = (\bm{X}\bm{I}_{ps})'(\bm{X} \bm{I}_{ps}) = \bm{I'}_{ps}\bm{X'}\bm{X} \bm{I}_{ps}$, and $(\bm{Z'}\bm{Z})^{-1} = (\bm{I'}_{ps})^{-1}(\bm{X'}\bm{X})^{-1} (\bm{I}_{ps})^{-1}$. 
	Obviously, $(\bm{I}_{ps})^{-1} = (\bm{I'}_{ps})^{-1}$ is the identity matrix whose element of $p$-th column and $p$-th row is $1/s$. Let $d_{ij}$ and $e_{ij}$ be the elements of $i$-th row and $j$-th column of $(\bm{Z'}\bm{Z})^{-1}$ and $(\bm{X'}\bm{X})^{-1}$, then
	$$
	d_{ij} = \left \{ 
	\begin{array}{lr}
		e_{ij} \quad i,j \neq p, & \\
		e_{ij}/s^2 \quad i=j=p, &\\
		e_{ij}/s \quad else.& \\
	\end{array}
	\right. 
	$$
	By \emph{Lemma 1}, $\mathrm{Var}(\hat{\beta}_p) = \sigma^2 e_{pp}$, $\mathrm{Var}(\tilde{\beta}_p) = \sigma^2 d_{pp}$. Therefore, $\mathrm{Var}(\tilde{\beta}_p) = \frac{1}{s^2}\mathrm{Var}(\hat{\beta}_p)$.

\qed\end{proof}

For convenience of analysis, the following part of this section assumes that every $\bm{x}_j$ is independently uniformly distributed in $[ - \frac{L}{2}, \frac{L}{2}]$, where $[ - \frac{L}{2}, \frac{L}{2}]$ is the $j$-th dimension of $H$.

\noindent
\textbf{Lemma 3.2.}
\emph{
	$\mathrm{Pr}\{\bm{x}'\bm{x} \geq \frac{nL^2}{24}\} \geq 1 - \frac{16}{5n}$, if $\bm{x} = (x_1,x_2,\cdots,x_n)'$, $x_i \sim U[-\frac{L}{2},\frac{L}{2}](i=1,2,\cdots,n)$, and $x_1,x_2,\cdots,x_n$ are independent.
}
\begin{proof}
	Firstly, we consider the expectation and variance of $\bm{x}'\bm{x} = x_1^2 + x_2^2 + \cdots + x_n^2$. For any $x_i^2$, $x_i \sim U[-\frac{L}{2},\frac{L}{2}]$, and then $P\{x_i^2 < z\} = P\{ -\sqrt{z} < x_i < \sqrt{z} \} = \int_{-\sqrt{z}}^{\sqrt{z}} \frac{1}{L} \mathrm{d}x = \frac{2 \sqrt{z}}{L}$. Since $0 \leq x_i^2 \leq \frac{L^2}{4}$, $\mathbb{E} x_i^2 = \int_{0}^{L^2 /4} z  \mathrm{d} \frac{2 \sqrt{z}}{L} = \frac{2}{3L} z^{\frac{3}{2}} \big|_0^{L^2/4} = \frac{L^2}{12}$. Similarly, $\mathbb{E} x_i^4 = \int_{-L/2}^{L/2} x^4 \frac{1}{L} \mathrm{d}x = \frac{1}{5L} x^5 \big|_{-L/2}^{L/2} = \frac{L^2}{80}$. So, $\mathrm{Var}(x_i^2) = \mathbb{E} (x_i^2)^2 - (\mathbb{E} x_i^2)^2 = \mathbb{E} x_i^4 - (\mathbb{E} x_i^2)^2 = \frac{L^4}{180}$. Therefore, $\mathbb{E} \bm{x'}_j\bm{x}_j = \frac{nL^2}{80}, \mathrm{Var} (\bm{x'}_j\bm{x}_j) = \frac{nL^4}{180}$, since $x_1,x_2,\cdots,x_n$ are independent.
	
	Using Chebyshev inequality, there is: 
	$$\mathrm{Pr}\{ |\bm{x'}_j\bm{x}_j - \frac{nL^2}{12}| \leq \frac{nL^2}{24} \} \geq 1 - \frac{\frac{nL^4}{180}}{(\frac{n L^2}{24})^2} = 1 - \frac{16}{5n}.$$
	Further on, $\mathrm{Pr}\{ \bm{x'}_j\bm{x}_j \geq \frac{nL^2}{12} - \frac{nL^2}{24} \} \geq \mathrm{Pr}\{ |\bm{x'}_j\bm{x}_j - \frac{nL^2}{12}| \leq \frac{nL^2}{24} \} \geq 1 - \frac{16}{5n}$. Therefore, $\mathrm{Pr}\{ \bm{x'}_j\bm{x}_j  \geq \frac{nL^2}{24} \} \geq 1 - \frac{16}{5n}$.
\qed\end{proof}

\noindent
\textbf{Lemma 3.3.}
\emph{
	If $\bm{x} = (x_1,x_2,\cdots,x_n)'$, $\bm{A}$ is an idempotent matrix such that $\mathrm{tr}(\bm{A})=n-k+1(k < n)$, then when $n \to \infty$, $\mathrm{Pr}\{ \bm{x}' \bm{A} \bm{x} \geq \frac{1}{2} \bm{x}' \bm{x} \} \to 1$. 
}
\begin{proof}
	Since $\bm{A}$ is an idempotent matrix and $\mathrm{tr}(\bm{A})=k-1$, there exists an orthogonal matrix $\bm{T}$ such that $\bm{T}' \bm{AT} = \left( \begin{smallmatrix}
		\bm{I}_{n-k+1} &  \\
		& \bm{O} 
	\end{smallmatrix} \right) $, where $\bm{I}_{n-k+1}$ is an identity matrix of order $n-k+1$. Suppose $\bm{z} = \bm{T}'\bm{x}$, $\bm{x}' \bm{A} \bm{x} = \bm{z}' \left( \begin{smallmatrix}
		\bm{I}_{n-k+1} &  \\
		& \bm{O} 
	\end{smallmatrix} \right) \bm{z} = z_1^2 + z_2^2 + \cdots + z_{n-k+1}^2$. Besides, $\bm{z}'\bm{z} = \bm{x}'\bm{T}\bm{T}'\bm{x}= \bm{x}' \bm{x}$. So, $z_{n-k+2}^2 + z_{n-k+3}^2 + \cdots + z_n^2 \geq \frac{1}{2}\bm{z}'\bm{z}$ when $\bm{x}' \bm{A} \bm{x} < \frac{1}{2} \bm{x}' \bm{x}$.
	
	From linear algebra we know that $\bm{T}'\bm{x}$ could be seen as a rotation of $\bm{x}$. Let $r^2 = \bm{x}'\bm{x}$, then $\bm{z} = \bm{T}'\bm{x}$ is a point on circle $C: a_1^2 + \cdots + a_n^2 = r^2$. By the symmetry and arbitrary of $\bm{T}$, $\bm{z}$ is uniformly distributed on circle $C$. Therefore, $\mathrm{Pr}\{z_{n-k+2}^2 + z_{n-k+3}^2 + \cdots + z_n^2 \geq \frac{1}{2} \bm{z}'\bm{z} \} = \frac{S_T}{S}$, where $S$ is the surface area of $C$ and $S_T$ is the area of $C$ satisfying $z_{n-k+2}^2 + z_{n-k+3}^2 + \cdots + z_n^2 \geq \frac{r^2}{2}$.
	
	Consider the polar transform, $z_1 = \rho \cos \varphi_1, z_2 = \rho \sin \varphi_1 \cos \varphi_2$, $\cdots$, $z_n = \rho \sin \varphi_1 \sin \varphi_2 \cdots \sin \varphi_{n-2} \cos \varphi_{n-1}$, then
	\begin{align*}
		S = V'(\rho) = &\frac{\mathrm{d}}{\mathrm{d}\rho} \int_0^r \rho^{n-1} \mathrm{d}\rho \int_0^{\pi} \sin \varphi_1^{n-2} \mathrm{d} \varphi_1 \int_0^{2\pi} \sin \varphi_2^{n-3} \mathrm{d} \varphi_2 \cdot \\
		&\cdots \int_0^{2\pi} \mathrm{d}\varphi_{n-1}.
	\end{align*}
	When $z_{n-k+2}^2 + z_{n-k+3}^2 + \cdots + z_n^2 \geq \frac{r^2}{2}$, $\sin^2 \varphi_1 \sin^2 \varphi_2 \cdots \sin^2 \varphi_{n-k+1} \geq \frac{r^2}{2 \rho^2}$.
	Let $\bm{x}_{n-k+1} = (\rho,\varphi_1,\cdots,\varphi_{n-k+1})$, 
	\begin{align*}
		S_T = & \frac{\mathrm{d}}{\mathrm{d}\rho} \int_{\Omega} \rho^{n-1} \sin \varphi_1^{n-2} \mathrm{d} \varphi_1  \sin \varphi_2^{n-3} \cdot
		\cdots \sin \varphi_{n-k+1}^{k-2} \mathrm{d} \bm{x}_{n-k+1} \cdot \\
		&  \int_0^{2\pi} \varphi_{n-k+2}^{k-3} \mathrm{d}\varphi_{n-k+2} \int_0^{2\pi} \varphi_{n-k+3}^{k-4} \mathrm{d}\varphi_{k-1} \cdots \int_0^{2\pi} \mathrm{d}\varphi_{n-1} ,
	\end{align*}
	where $\Omega$ is the area of $\sin^2 \varphi_1 \sin^2 \varphi_2 \cdots \sin^2 \varphi_{n-k+1} \geq \frac{r^2}{2 \rho^2}$.
	
	Suppose that $\Omega_0 = [0,r]\times[0,\pi]\times[0,2\pi]\times\cdots\times[0,2\pi]$, then
	\begin{align*}
		\frac{S_T}{S} &= \frac{\mathrm{d}}{\mathrm{d}\rho} \frac{\int_{\Omega} \rho^{n-1} \sin \varphi_1^{n-2}  \sin \varphi_2^{n-3}\cdots \sin \varphi_{n-k+1}^{k-2} \mathrm{d}\bm{x}_{n-k+1}}{\int_{\Omega_0} \rho^{n-1} \sin \varphi_1^{n-2} \sin \varphi_2^{n-3}\cdots \sin \varphi_{n-k+1}^{k-2} \mathrm{d}\bm{x}_{n-k+1}} \\
		&\leq  \frac{|\Omega|}{|\Omega_0|} .
	\end{align*}
	From the expression of $\Omega$, $\Omega$ is symmetric around $\sin \varphi_1=\sin \varphi_2=\cdots=\sin \varphi_{n-k+1}$ and $\sin \varphi_i$s are no bigger than $1$. When $\sin \varphi_1= \sin \varphi_2=\cdots= \sin \varphi_{n-k+1}$, $\sin \varphi_i = \sqrt[2(n-k+1)]{1/2}$. So when $n \to \infty$, $\sin \varphi_i \to 1$ if all $\sin \varphi_i$s are equal. And when $n$ is big enough, the measure of $\Omega$'s subset with a center must smaller than 1. Since there are $2^{n-k-1}$ centers of symmetry, $|\Omega| \leq 2^{n-k-1} \cdot r \cdot \pi \cdot (1)^{n-k-1} = r \pi 2^{n-k-1} $. 
	Obviously, $|\Omega_0| = r \cdot \pi \cdot (2\pi) \cdots (2\pi) = r \pi (2 \pi)^{n-k-1}$. So, $\frac{S_T}{S} \leq r\pi(\frac{1}{\pi})^{n-k-1} \to 0, n \to \infty$. Therefore, when $n \to 0$, $\mathrm{Pr}\{z_{n-k+2}^2 + z_{n-k+3}^2 + \cdots + z_n^2 \geq \frac{1}{2} \bm{z}'\bm{z} \} \to 0$, which equals to $\mathrm{Pr}\{ \bm{x}' \bm{A} \bm{x} \geq \frac{1}{2} \bm{x}' \bm{x} \} \to 1$.
\qed\end{proof}

\noindent
\textbf{Lemma 3.4.}
\emph{
	Suppose that $\bm{X}_j (j=1,2,\cdots,k)$ is a submatrix of $\bm{X}$ which drops $\bm{x}_j$,
	$H_j = [ - \frac{L}{2}, \frac{L}{2}]$, then $\mathrm{Pr}\{ \mathrm{Var}(\hat{\beta}_j) \leq \frac{48 \sigma^2}{n L^2} \} \to 1, n \to \infty$.
}

\begin{proof}
	By \emph{Lemma 1}, $\mathbb{E} \hat{\bm{\beta}} = \bm{\beta}, \mathrm{Var}(\hat{\bm{\beta}}) = \sigma^2 (\bm{X'}\bm{X})^{-1}$. 
	Letting $\bm{C} = (\bm{X'}\bm{X})^{-1}$, there is $\mathrm{Var} (\hat{\beta}_j) = \sigma^2 \bm{C}_{jj}$.
	The \cite{book2021lra} proved that $$\mathrm{Var}(\hat{\beta}_j) = \sigma^2 (\bm{x}_j' \bm{x}_j - \bm{x}_j' \bm{X}_j (\bm{X}_j' \bm{X}_j)^{-1} \bm{X}_j' \bm{x}_j )^{-1},$$ 
	and then
	$$\bm{C}_{jj} = \frac{1}{\bm{x}_j'( \bm{I} - \bm{X}_j (\bm{X}_j' \bm{X}_j)^{-1} \bm{X}_j' )\bm{x}_j } .$$
	
	Let $\bm{A} = \bm{I} - \bm{X}_j (\bm{X}_j' \bm{X}_j)^{-1} \bm{X}_j'$, then $\mathrm{tr}(A)=n-k+1$ from \cite{book2021lra}. Besides, $\bm{A}$ is a idempotent matrix, which means there exists an orthogonal matrix $\bm{T}$ so that $\bm{T}' \bm{AT} = \left( \begin{smallmatrix}
		\bm{I}_{r} &  \\
		& \bm{O} 
	\end{smallmatrix} \right) $, where $\bm{I}_{r}$ is a identity matrix of order $r$ and $rank(\bm{A}) = r$. From \cite{book2021lra} we could know that $r=n-k+1$.
	
	Let $p_1 = \mathrm{Pr}\{\bm{x}_j' \bm{A} \bm{x}_j \geq \frac{1}{2} \bm{x}_j' \bm{x}_j\}$, then by \emph{Lemma 3.3}, $p_1 \to 1, n \to \infty$. Let $p_2 = \mathrm{Pr}\{\bm{x}_j' \bm{x}_j \geq \frac{nL^2}{24} \}$, then by \emph{Lemma 3.2}, $p_2 \geq 1- \frac{16}{5n}$. Since $\bm{x}_j$ and $\bm{A}$ are independent, $\mathrm{Pr}\{ \bm{x}_j' \bm{A} \bm{x}_j \geq \frac{nL^2}{48}\} = p_1 p_2 \geq p_2 (1-\frac{16}{5n})$. So $\mathrm{Pr}\{ \bm{x}_j' \bm{A} \bm{x}_j \geq \frac{nL^2}{48}\} \to 1, n\to \infty$. 
	
	From the expression of $\bm{A}$ we know that $\mathrm{Var}(\hat{\beta}_j) = \frac{\sigma^2}{\bm{x}_j' \bm{A} \bm{x}_j}$. Therefore, $\mathrm{Pr}\{\mathrm{Var}(\hat{\beta}_j) \leq \frac{48\sigma^2}{nL^2} \} = \mathrm{Pr}\{ \bm{x}_j' \bm{A} \bm{x}_j \geq \frac{nL^2}{48}\} \to 1, n\to \infty$.
\qed\end{proof}

\noindent
\textbf{Lemma 3}
\emph{
	Suppose $\bm{X}_j, H_j, L, \lambda_1$ are defined the same way as \emph{Lemma 3.4}, given $\epsilon > 0, 0 < \delta <1$, then for any $\epsilon' >  0$, there exists an $n_0$ such that when $L \geq \frac{4 \sqrt{3} \sigma}{\epsilon \sqrt{n \delta}}$ and $n > n_0$, $\mathrm{Pr}\{ |\hat{\beta}_j - \beta_j| \geq \epsilon \} \leq \delta$ holds with possibility no less than $1-\epsilon'$. Further, $\mathbb{E}(|\hat{\beta}_j - \beta_j|)$ is monotonically decreasing at $nL^2$.
}

\begin{proof}
	From \emph{Lemma 1} and \emph{Lemma 3.4}, there is $\mathbb{E} \hat{\beta}_j = \beta_j$ and $\mathrm{Pr}\{ \mathrm{Var}(\hat{\beta}_j) \leq \frac{48 \sigma^2}{n L^2} \} \to 1$ when $n \to \infty$. Using Chebyshev inequality, $\mathrm{Pr}\{|\hat{\beta}_j - \beta_j| \geq \epsilon \} \leq \frac{\mathrm{Var}(\hat{\beta}_j)}{\epsilon^2}$.
	Since $\lim\limits_{n\to \infty} \mathrm{Pr}\{ \mathrm{Var}(\hat{\beta}_j) \leq \frac{48 \sigma^2}{n L^2} \} =1$, for any $\epsilon' >  0$, there exists an $n_0$ such that $\mathrm{Pr}\{ \mathrm{Var}(\hat{\beta}_j) \leq \frac{48 \sigma^2}{n L^2} \} > 1 - \epsilon'$ when $n > n_0$.
	Let $\frac{48\sigma^2}{\epsilon^2 n L^2} \leq \delta$, then $L \geq \frac{4\sqrt{3}\sigma}{\epsilon \sqrt{n \delta} }$.
	Therefore, when $L \geq \frac{4\sqrt{3}\sigma}{\epsilon \sqrt{n \delta} }$ and $n > n_0$, there are $\mathrm{Pr}\{\mathrm{Var}(\hat{\beta}_j) \leq \epsilon^2 \delta\} \geq 1-\epsilon'$ and $\mathrm{Pr}\{|\hat{\beta}_j - \beta_j| \geq \epsilon \} \leq \frac{\mathrm{Var}(\hat{\beta}_j)}{\epsilon^2}$, which means $\mathrm{Pr}\{ |\hat{\beta}_j - \beta_j| \geq \epsilon \} \leq \delta$ holds with possibility no less than $1-\epsilon'$.
	And when $nL^2$ increases, $\mathrm{Var}(\hat{\beta}_j)$ decreases, which means $\mathbb{E}(|\hat{\beta}_j - \beta_j|)$ is monotonically decreasing at $nL^2$.
\qed\end{proof}

\section{The Proof of Theorem 5}
\noindent
\textbf{Theorem 5}
\emph{
	Suppose that $\bm{DS}$ is uniformly distributed in a big enough value range, the value range of $\bm{DS}$ could be divided into $m \leq M_0$ continuous areas $A_1,\cdots,A_m$, $S_i = \bm{DS} \cap A_i$, $|S_i|\geq n_0$ and $S_i$ can be fitted by a linear function, then the expected time complexity of \emph{Algorithm 2} is $O(M_0(N+k^3+\frac{k^2}{\epsilon^2}))$.
}
\begin{proof}
	From the process of \emph{Algorithm 2}, the time complexity of constructing $f$ in Line 2 is $O(k^2N+k^3)$. The time complexity of estimate($\bm{DS}$) is $O(N)$. Calculating $n_s$ and $L$ costs $O(1)$ time and $n_s = O(\frac{1}{\epsilon^2})$. The provoked Subset($\bm{DS}$,$n_s$,$L$) costs $O(N)$ time. Constructing $f$ with subset costs $O(k^2n_s+k^3)$ time. The examine phase costs $O(N)$ time. 
	Consider the chances of successfully choosing an $H$, such that $H\cap \bm{DS} \subset A_j$ for some $j=1,2,\cdots,m$. Denote the possibility of choosing an $H$, such that $H\cap \bm{DS} \subset A_j$ for some $j=1,2,\cdots,m$ as $P{H \cap \bm{DS} \subset A_j} = p$. Since the value range of $\bm{DS}$ is big enough, $p>>0.001$. Therefore, the expected time $t$ of choosing $H$ is $\mathbb{E}t = \frac{1}{p}$. Then, the expected time complexity of all Subset($\bm{DS}$,$n_s$,$L$) invoked in each iteration is $O(\frac{1}{p}N)=O(N)$. The iteration will carry out at most $M_0$ times totally. Therefore, the expected time complexity of \emph{Algorithm 2} is $O(k^2N+k^3) + O(M_0(O(N)+O(k^2n_s+k^3))) = O(M_0(N+k^3+\frac{k^2}{\epsilon^2}))$.
\qed\end{proof}

\section{Experiment Results}
This section gives an experimental evaluation of MMLR. Two important aspects are considered, computation time and prediction accuracy, which uses RMSE as criterion.

\textbf{Platform:} The experiments are conducted in Rstudio as IDE, using R 4.1.2 as experimental environment, on a single processor machine, with a 11th Gen Intel(R) Core(TM) i9-11900K CPU of 3.50GHz, and 128GB of main memory.

\textbf{Dataset:} The experiments use both synthetic and real-world dataset. To best showing the performance and functions of regression, all datasets are of numeric data type, and have continuous label values.

One hundred synthetic datasets are used in the experiment. They have 1 millions data points to simulate big data environment. Each one of them has 5-20 subsets fitting different linear functions. The dimensions of them are 1-5, and each kind has 20 datasets examples.

Four real-world datasets for regression and classification are used in the experiments, which are got from the PMLB repository\cite{romano2022pmlb}. The datasets are of $2-6$ dimensions and 100000 data points.

This paper compare with several commonly used regression methods, linear regression(LR), Bayesian additive regression tree(CART), gradient boosting decision tree(GBDT), random forest and model tree(MT). Since the piecewise regression and segmented regression methods used more than hours to construct models, which is not efficient in practice, the related consequences are not listed for real-world datasets. The MT and random forest also spend too much time on synthetic datasets, the related results are not listed as well.

This paper also uses 60 small datasets with size of 2000 instances to compare with more methods. The dimensions of them are 2-4 and each of them has 5-20 subsets to fit different linear models. Support vector regression(SVR), neural network with reLU(NN) are also compared for these small datasets.

The results of synthetic datasets are shown in Figure 3 and Figure 4. The values are average of all synthetic datasets. The results of real-world datasets are shown in Table 1.

From the experiment results, it is obvious MMLR has higher efficiency and higher prediction accuracy than a number of commonly used methods on many synthetic and real-world datasets. 
For small datasets, MMLR keeps high accuracy. Although random forest has less RMSE and model-construct time, the scalability to big data and interpretability is worse than MMLR. In conclusion, MMLR is a competitive method for regression tasks.

\begin{figure*}
	
	\centering
	\begin{minipage}[t]{0.5\linewidth}
		\centering
		\includegraphics[width=6.0cm]{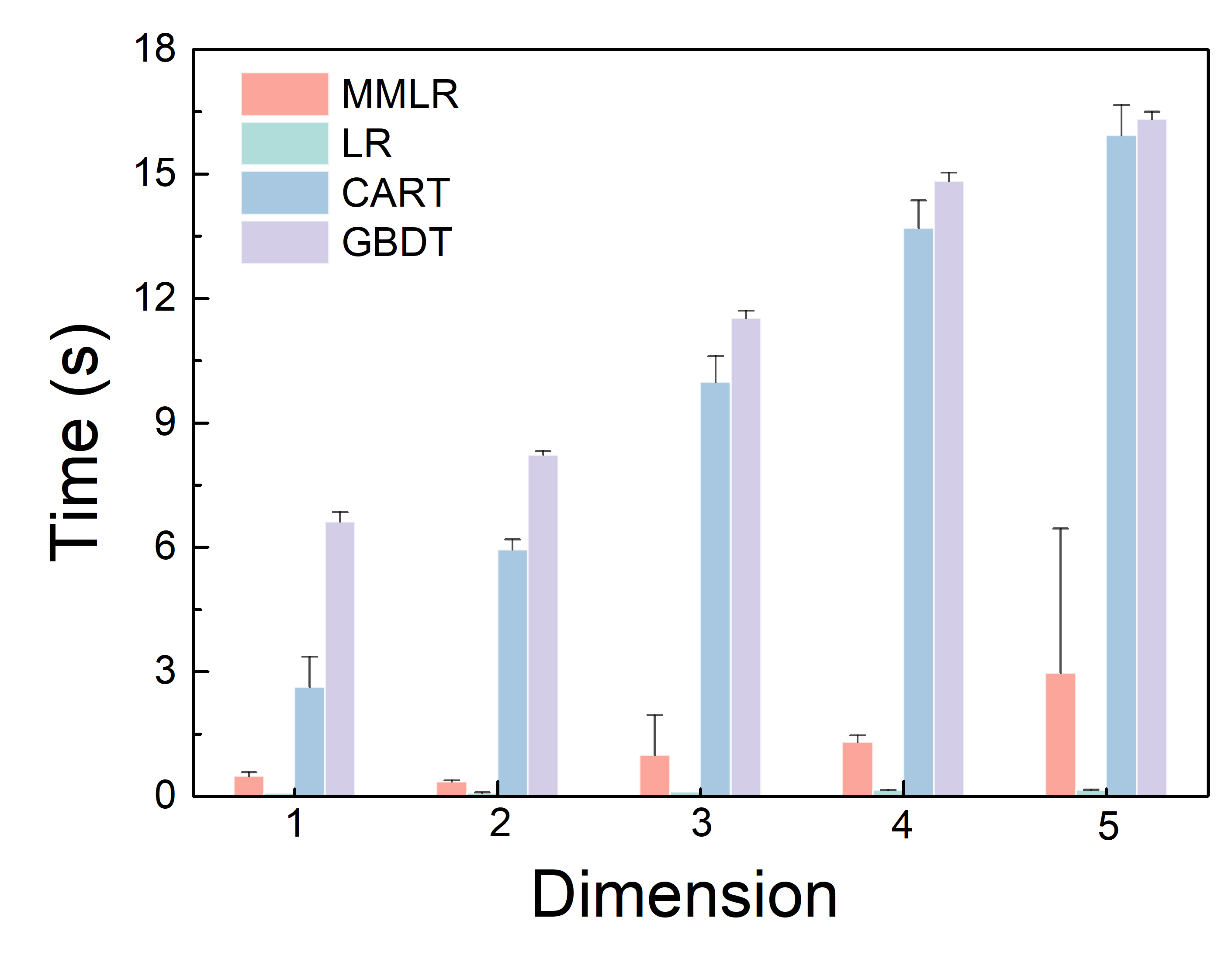}
		\caption*{(a) TIME}

	\end{minipage}%
	\begin{minipage}[t]{0.5\linewidth}
		\centering
		\includegraphics[width=6.0cm]{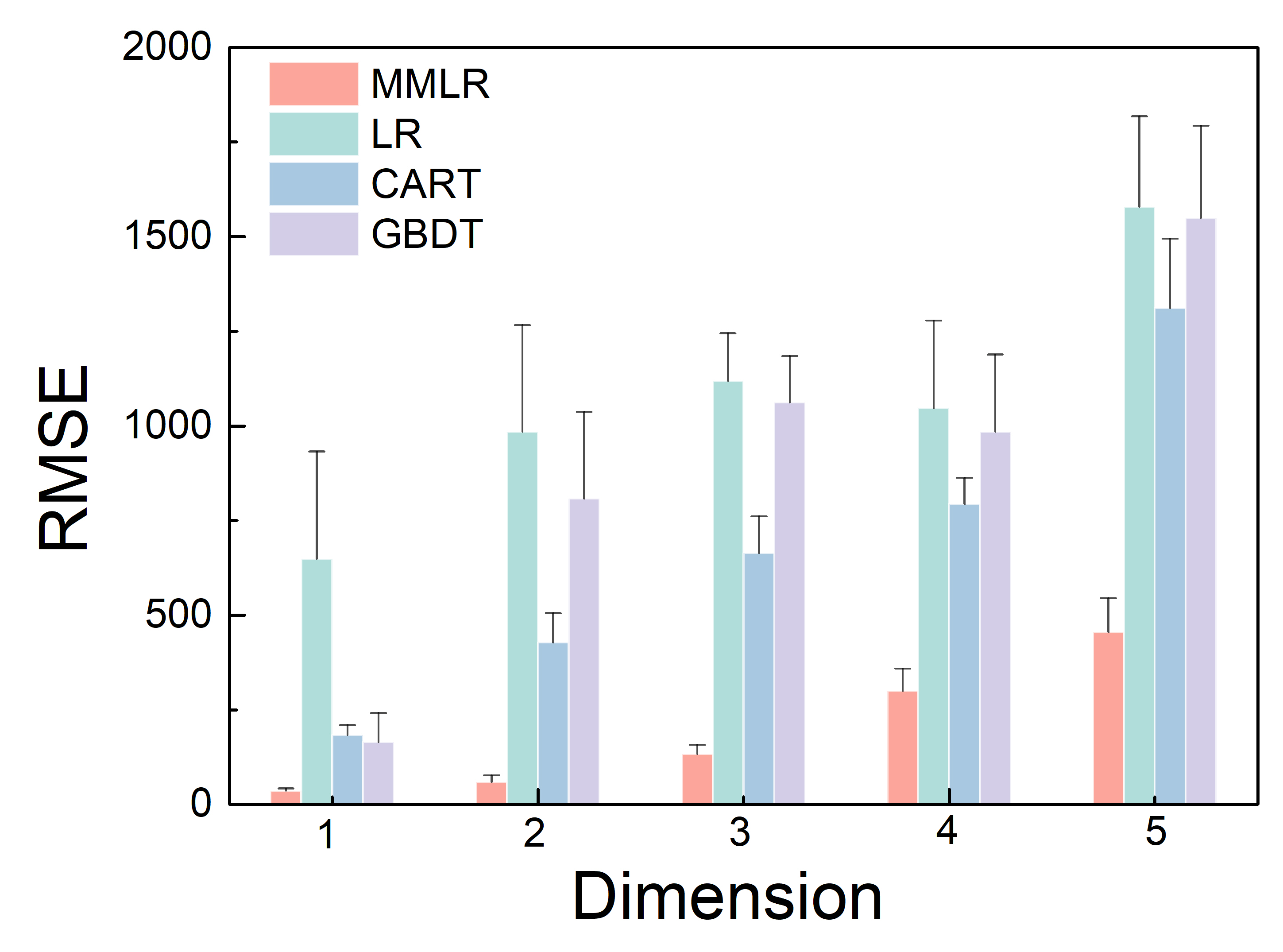}
		\caption*{(b) RMSE}
		
		\label{fig:side:a}
	\end{minipage}

	\caption{ Comparison on Time,RMSE for 4 methods on synthetic datasets}
\end{figure*}

\begin{figure*}
	
	\centering
	\begin{minipage}[t]{0.5\linewidth}
		\centering
		\includegraphics[width=6.0cm]{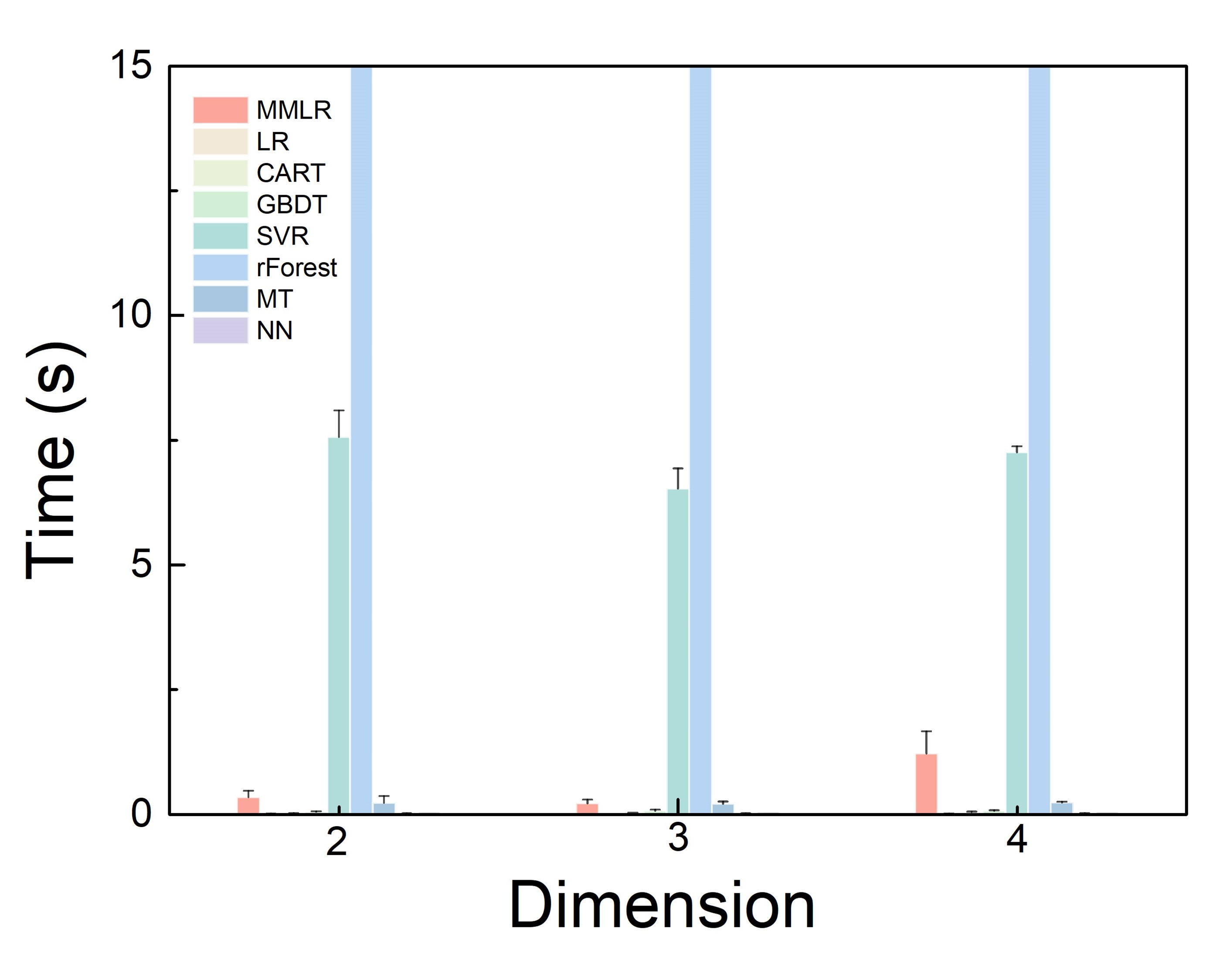}
		\caption*{(a) TIME}
		
	\end{minipage}%
	\begin{minipage}[t]{0.5\linewidth}
		\centering
		\includegraphics[width=6.0cm]{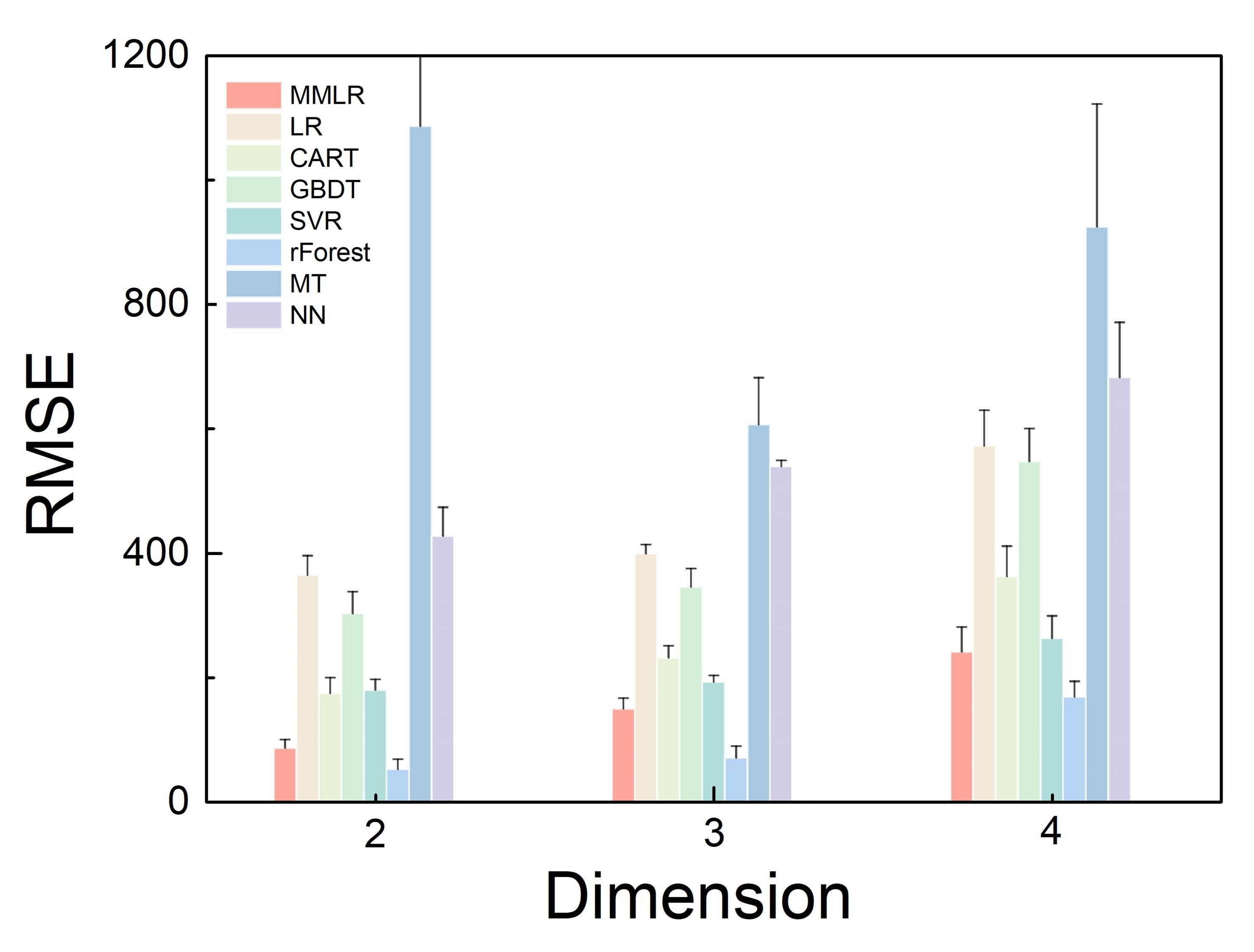}
		\caption*{(b) RMSE}
		
		\label{fig:side:a}
	\end{minipage}
	
	\caption{ Comparison on Time,RMSE for 8 methods on small datasets}
\end{figure*}

\begin{table}[tp]% h asks to places the floating element [h]ere.
	\centering
	\begin{tabular}{clccccl}
		
		\toprule
		Methods & Datasets & RMSE & MAE & TIME(s) \\
		\midrule
		MMLR & Feynman-I-12-1    & 7.089 & 5.089 & 0.47 \\
		& Feynman-test-3    & 13.524 & 9.514 & 0.39 \\
		& Feynman-I-10-7    & 3.225 & 2.584 & 0.07 \\
		& Feynman-I-11-19   & 51.77 & 41.937 & 0.56 \\
		LR  & Feynman-I-12-1    & 66.3814 & 49.871 & 0.01 \\
		& Feynman-test-3    & 28.574 & 20.782 & 0.02 \\
		& Feynman-I-10-7    & 4.474 & 2.947 & 0.01 \\
		& Feynman-I-11-19   & 116.829 & 92.522 & 0.01 \\
		CART & Feynman-I-12-1    & 82.851 & 67.836 & 0.15 \\
		& Feynman-test-3    & 36.679 & 28.464 & 0.31 \\
		& Feynman-I-10-7    & 10.406 & 8.033 & 0.19 \\
		& Feynman-I-11-19   & 279.372 & 223.848 & 0.64 \\
		GBDT & Feynman-I-12-1    & 70.677 & 53.081 & 0.41 \\
		& Feynman-test-3    & 29.404 & 20.778 & 0.55 \\
		& Feynman-I-10-7    & 4.831 & 2.962 & 0.47 \\
		& Feynman-I-11-19   & 189.053 & 149.529 & 0.75 \\
		RandomForest& Feynman-I-12-1   & 70.677 & 53.081 & 9.17 \\
		& Feynman-test-3    & * & * & * \\
		& Feynman-I-10-7    & * & * & * \\
		& Feynman-I-11-19   & * & * & * \\
		MT & Feynman-I-12-1    & 27.855 & 19.869 & 1.19 \\
		& Feynman-test-3    & 73.591 & 40.196 & 1.76 \\
		& Feynman-I-10-7    & 34.369 & 9.92 & 0.7 \\
		& Feynman-I-11-19   & 199.09 & 151.354 & 2.56 \\
		\bottomrule
	\end{tabular}
	\setlength{\belowcaptionskip}{10pt}
	\caption{Performance of Regression Methods on Real-World Datasets}
	\label{tab:freq}
\end{table}

\end{document}